\documentclass{article}

\usepackage{microtype}
\usepackage{graphicx}
\usepackage{subcaption}
\usepackage{booktabs} 

\usepackage{hyperref}

\usepackage[accepted]{arxiv2025}

\usepackage{amsmath}
\usepackage{amssymb}
\usepackage{mathtools}
\usepackage{amsthm}

\usepackage[capitalize,noabbrev]{cleveref}
\crefname{ALC@line}{line}{lines}

\theoremstyle{plain}
\newtheorem{theorem}{Theorem}[section]
\newtheorem*{theorem*}{Theorem}

\newtheorem{lemma}[theorem]{Lemma}
\newtheorem*{lemma*}{Lemma}
\newtheorem{corollary}[theorem]{Corollary}
\theoremstyle{definition}
\newtheorem{definition}[theorem]{Definition}
\newtheorem{assumption}[theorem]{Assumption}
\theoremstyle{remark}

\usepackage{tikz}
\usepackage{tikzsymbols}
\usetikzlibrary{arrows}
\usepackage{tabularx}
\usepackage{multirow}
\usepackage{array}
\usepackage{makecell}
\usepackage{arydshln}
\usepackage{pifont}
\usepackage{bbding}
\usepackage{stmaryrd}
\usepackage{blindtext}
\usepackage{mathrsfs}
\usepackage[export]{adjustbox}
\usepackage{thmtools}
\usepackage{thm-restate}
\usepackage[colorinlistoftodos,textwidth=2.1cm]{todonotes}

\definecolor{tealblue}{HTML}{66C7C7}

\newcommand{\alglinelabel}[1]{%
  \addtocounter{ALC@line}{-1}%
  \refstepcounter{ALC@line}%
  \label{#1}%
}

\newcommand{\lemref}[1]{Lemma~(\ref{#1})}
\newcommand{\theoref}[1]{Theorem~(\ref{#1})}

\newcommand{\eg}[0]{\textit{e.g., }}

\newcommand{\ie}[0]{\textit{i.e., }}

\newcommand{\kdp}[0]{\kappa_{\eps,\delta}}
\newcommand{\abs}[1]{\left| #1 \right|}

\newcommand{\D}[0]{\Dcal}
\newcommand{\Df}[0]{\Dcal_f}
\newcommand{\Dr}[0]{\Dcal_r}

\newcommand{\thetilde}[0]{\widetilde{\vtheta}}
\newcommand{\kl}[0]{\kappa_{\ell}}

\definecolor{darkgreen}{RGB}{0,180,0}
\definecolor{purple}{RGB}{200,0, 200}
\definecolor{royal_blue}{RGB}{90, 120, 250}

\definecolor{tealblue}{HTML}{66C7C7}
\definecolor{coolgreen}{RGB}{0, 155, 119}

\usepackage{xspace}
\newcommand{\VRU}{\hyperref[alg:VRU]{\textit{VRU}}\xspace}

\newcommand{\norm}[1]{\left\lVert#1\right\rVert}

\newcommand{\Acal}{\mathcal{A}}

\newcommand{\Dcal}{\mathcal{D}}

\newcommand{\Fcal}{\mathcal{F}}

\newcommand{\Lcal}{\mathcal{L}}

\newcommand{\Ncal}{\mathcal{N}}
\newcommand{\Ocal}{\mathcal{O}}
\newcommand{\Pcal}{\mathcal{P}}

\newcommand{\Ucal}{\mathcal{U}}

\newcommand{\expect}{\operatorname{\mathbb{E}}}

\newcommand{\E}[1]{\expect\left[#1\right]}

\usepackage{amsmath,amsfonts,bm}

\def\Figref#1{Figure~\ref{#1}}

\def\secref#1{section~\ref{#1}}

\def\eqref#1{equation~\ref{#1}}
\def\Eqref#1{Eq.~\ref{#1}}

\def\Algref#1{Alg.~\ref{#1}}

\def\1{\bm{1}}

\def\eps{{\epsilon}}

\def\vtheta{{\bm{\theta}}}

\def\vx{{\bm{x}}}
\def\vy{{\bm{y}}}

\DeclareMathAlphabet{\mathsfit}{\encodingdefault}{\sfdefault}{m}{sl}
\SetMathAlphabet{\mathsfit}{bold}{\encodingdefault}{\sfdefault}{bx}{n}

\def\sA{{\mathbb{A}}}

\def\sE{{\mathbb{E}}}

\def\sN{{\mathbb{N}}}

\def\sU{{\mathbb{U}}}

\newcommand{\nablaT}{\widetilde{\nabla}}

\icmltitlerunning{VRU: Variance-Reduced Unlearning}
\begin{document}

\twocolumn[
\icmltitle{Variance-Reduced $(\varepsilon, \delta)-$Unlearning using Forget Set Gradients}

\begin{icmlauthorlist}
\icmlauthor{Martin Van Waerebeke}{yyy}
\icmlauthor{Marco Lorenzi}{zzz}
\icmlauthor{Kevin Scaman}{yyy}
\icmlauthor{El Mahdi El Mhamdi}{mmm}
\icmlauthor{Giovanni Neglia}{zzz}
\end{icmlauthorlist}

\icmlaffiliation{yyy}{INRIA Paris}
\icmlaffiliation{zzz}{INRIA Sophia Antipolis}
\icmlaffiliation{mmm}{CMAP - Ecole polytechnique}

\icmlcorrespondingauthor{Martin Van Waerebeke}{martin.van-waerebeke@inria.fr}

\icmlkeywords{Machine Learning, ICML}

\vskip 0.3in
]

\printAffiliationsAndNotice{}

\begin{abstract}

In machine unlearning, $(\varepsilon,\delta)-$unlearning is a popular framework that provides formal guarantees on the effectiveness of the removal of a subset of training data, the \emph{forget set}, from a trained model. 
For strongly convex objectives, existing first-order methods achieve $(\varepsilon,\delta)-$unlearning, but they only use the forget set to calibrate injected noise, never as a direct optimization signal. 
In contrast, efficient empirical heuristics often exploit the forget samples (e.g., via gradient ascent) but come with no formal unlearning guarantees.
We bridge this gap by presenting the Variance-Reduced Unlearning (\VRU) algorithm. To the best of our knowledge, \VRU is the first first-order algorithm that directly includes forget set gradients in its update rule, while provably satisfying $(\varepsilon,\delta)-$unlearning.
We establish the  convergence of \VRU and show that incorporating the forget set yields strictly improved rates, \ie a better dependence on the achieved error
compared to existing first-order $(\varepsilon,\delta)-$unlearning methods. Moreover, we prove that, in a low-error regime,  \VRU 
asymptotically outperforms
any first-order method that ignores the forget set.
Experiments corroborate our theory, showing consistent gains over both state-of-the-art certified unlearning methods and over empirical
baselines that explicitly leverage the forget set.

\end{abstract}

\section{Introduction}

Machine Unlearning (MU) aims at efficiently removing the influence of a subset of training examples (the \emph{forget set}) from a trained model so that, after unlearning, the model is equivalent to one obtained by retraining   
from scratch on the remaining data (the \emph{retain set}). 
The objective of MU is thus threefold: i) ensuring erasure of the forget set, ii) preserving model utility on the retain set, and iii) achieving substantial computational savings with respect to retraining from scratch. 

To formalize the notion of ``erasure'', the community has developed the framework of $(\varepsilon,\delta)-$unlearning \cite{ginart2019making}, which draws analogies from differential privacy \cite{DP_book}. More precisely, $(\varepsilon,\delta)-$unlearning provides statistical indistinguishability between the distributions of models obtained with a given unlearning procedure, and the one obtained by retraining on the retain set. 
Within this setting, first order methods based on variants of gradient descent are popular unlearning approaches, as their computational complexity scales well with the model size~\cite{allouah2024fast, sepahvand2025leveraging,koloskova2025certified}.  
These approaches rely primarily on fine-tuning on the retain set, and on suitable noise injection to guarantee the erasure of the forget set \cite{DescentToDelete}. 

Nevertheless, current theories for $(\varepsilon,\delta)-$unlearning present relevant \emph{theoretical} and \emph{practical} limitations, as they make limited use of the forget set during the unlearning procedure, using it primarily to calibrate the injected noise rather than as an optimization signal.

From the theoretical perspective, 
it has been recently shown that the \emph{gradient on the forget set} drives the steepest descent direction for the unlearning loss \cite{huang2024unified}. Moreover, first order methods not relying on the forget set are less efficient than retraining from scratch in the low error regime \cite{van2025forget}. From the practical standpoint, several MU methods not backed by $(\varepsilon,\delta)$ guarantees do exploit the forget set for the unlearning optimization strategy, for example by performing gradient ascent steps on the forget set, or by fine-tuning on random labels \cite{fan2023salun}. While these methods are empirical and not backed by formal guarantees, they have been shown to outperform certified $(\varepsilon,\delta)-$unlearning methods in popular benchmarks \cite{maini2024tofu, shi2024muse, li2024wmdp}. 

\textbf{Contributions.} In this work we reconcile the notion of first order $(\varepsilon,\delta)-$unlearning with the effective use of the forget set during the unlearning procedure. To this end, we extend the family of first order $(\varepsilon,\delta)-$unlearning approaches with a novel method, \textit{V}ariance-\textit{R}educed \textit{U}nlearning (\VRU), and demonstrate its superiority over the state-of-the-art from both the theoretical and practical aspects. 

In particular, we:

\begin{itemize}
    \item Propose \VRU, the first first-order $(\varepsilon,\delta)-$unlearning method that includes gradient ascent on the forget set
    to improve unlearning efficiency.
    \item Analyze \VRU for strongly-convex losses, establishing convergence rates that improve over existing first-order $(\varepsilon,\delta)-$unlearning approaches.
    \item Prove that \VRU asymptotically beats any first-order $(\varepsilon,\delta)-$unlearning algorithms that does not use the forget-set.
    \item  Practically validate the effectiveness of \VRU  against both $(\varepsilon,\delta)-$unlearning and empirical methods.
\end{itemize}

\section{Related Works}
The domain of machine unlearning (MU) studies how to efficiently remove the influence of a designated subset of training examples from a learned model, with the goal of matching the model that would be obtained by retraining from scratch on the remaining data, at a substantially lower computational cost. 

Much progress has been made recently, both towards new unlearning methods \cite{guo2024mechanistic, schoepf2025redirection} and new ways to evaluate them \cite{maini2024tofu, hayes2025inexact, lu2025waterdrum}.

\textit{Unlearning methods are commonly categorized as either exact or approximate.} The former return a model identical to the one obtained  by retraining from scratch on the retain set, whereas the latter aim to closely approximate the retrained solution.

Exact methods offer ideal privacy guarantees, but usually require a modification of the training process, usually through tree-based approaches \cite{ullah2021machine, ullah2023adaptive} or sharding \cite{sisa, yan2022arcane, wang2023fedcsa}, limiting their scope.

\textit{Approximate MU algorithms can be divided into empirical and certified methods.} Empirical approaches prioritize  efficiency and broad applicability, but do not provide  explicit unlearning guarantees, whereas certified methods provably satisfy a formal definition of unlearning.
Several notions of  certified unlearning have been proposed. Early work focused on minimizing the KL-divergence with respect to a retrained model \cite{golatkar2020forgetting, Golatkar_2020_CVPR, Golatkar_2021_CVPR,  jin2023ntk}. More recent definitions consider alternatives such as cosine similarity with retraining \cite{melamed2025provable} or more sample-specific criteria~\cite{sepahvand2025leveraging}.

However, most certified unlearning methods are based on the notion of $(\varepsilon, \delta)-$unlearning \cite{ginart2019making},  aiming at achieving statistical indistinguishability between the post-unlearning model and a model retrained from scratch. To meet this guarantee, existing algorithms rely on either first-order \cite{DescentToDelete} or second-order \cite{guo2020certified} updates. Second-order methods exploit information about the local curvature of the loss function, while first-order methods typically scale better to high-dimensional models.

\textit{First-order MU methods that achieve $(\varepsilon, \delta)$-unlearning rely on a combination of noise injection and gradient-based fine-tuning on the retain set.}
Recent work has established convergence and utility guarantees for such methods under various assumptions~\cite{DescentToDelete, fraboni2024sifu, chien2024langevin, van2025forget, koloskova2025certified, mu2024rewind}. These approaches differ primarily on how they combine the two core operations: noise injection and fine-tuning on the retain set. Existing strategies include (i) fine-tuning on the retain set followed by output perturbation~\cite{DescentToDelete, allouah2024utility}, (ii) adding noise to the model and then fine-tuning~\cite{fraboni2024sifu, van2025forget, mu2024rewind}, and (iii) injecting noise at each gradient step during fine-tuning~\cite{chien2024langevin, koloskova2025certified, sepahvand2025leveraging}.

\textit{The remaining literature rely on the forget set to achieve approximate unlearning.} In particular, most empirical methods and most second-order certified approaches exploit the forget set to their advantage.
Empirical methods often apply gradient ascent on the forget set  \cite{chen2023unlearn, yao2024large,pang2025label}, but they also leverage it in other ways: discouraging the model from reproducing the original predictions on the forget set~\cite{kurmanji2024towards_unbounded}, training on fake or homogeneous labels \cite{fan2023salun}, replacing sensitive forget-set information with generic one \cite{zhang2024negative} or adversarial \cite{yao2024machine} content.

Second-order methods typically combine the forget-set gradient with the inverse Hessian to form a Newton-like unlearning update \cite{guo2020certified, sekhari2021remember, Golatkar_2020_CVPR}. Recent work accelerates these updates by avoiding explicit inverse-Hessian computation in an online-learning setting~\cite{ qiao2024hessian}.

\textit{Forget-set ascent steps are crucial for designing efficient MU procedures.} Recent articles have provided theoretical evidence that first-order MU algorithms must incorporate forget-set–based updates to achieve efficient unlearning. Under a local second-order approximation,~\citet{huang2024unified} decompose the optimal unlearning update, proving that it must include a weighted gradient ascent component on the forget set, alongside the standard descent on the retain set.

\section{Problem Statement}
In this section, we introduce key notations for our analysis.

We consider a supervised learning setting with model parameters $\bm{\theta} \in \mathbb{R}^d$ and loss $\ell: \mathbb{R}^d \times \mathbb{R}^s \to \mathbb{R} \cup \{+\infty\}$. 
Let $\mathcal{D}_f$ and $\mathcal{D}_r$ denote the forget and retain data distributions over $\mathbb{R}^s$, respectively. The original training distribution is the mixture $\mathcal{D} \coloneqq r_f \mathcal{D}_f + (1 - r_f) \mathcal{D}_r$, where $r_f \in (0,1)$ represents the fraction of data to be unlearned. For any distribution $\mathcal{D}$, we define the population risk as
\begin{equation}
    \mathcal{L}(\bm{\theta}, \mathcal{D}) \coloneqq \mathbb{E}_{\xi \sim \mathcal{D}}[\ell(\bm{\theta}, \xi)]\:.
\end{equation}

We recall that a differentiable function $f: \mathbb{R}^d \to \mathbb{R}$ is \emph{$\mu$-strongly convex} if $f(\bm{\theta}') \geq f(\bm{\theta}) + \langle \nabla f(\bm{\theta}), \bm{\theta}' - \bm{\theta} \rangle + \frac{\mu}{2} \|\bm{\theta}' - \bm{\theta}\|^2$ for all $\bm{\theta}, \bm{\theta}'$; \emph{$\beta$-smooth} if $\|\nabla f(\bm{\theta}) - \nabla f(\bm{\theta}')\| \leq \beta \|\bm{\theta} - \bm{\theta}'\|$; and \emph{$L$-Lipschitz} if $|f(\bm{\theta}) - f(\bm{\theta}')| \leq L \|\bm{\theta} - \bm{\theta}'\|$. 
We impose these regularity conditions on the sample loss:
\begin{assumption}[Loss regularity]\label{ass:loss_regularity}
    For any data point $\xi \in \mathbb{R}^s$, the function $\ell(\cdot, \xi)$ is $\mu$-strongly convex, $\beta$-smooth, and $L$-Lipschitz with respect to $\bm{\theta}$. 

\end{assumption}
These assumptions are standard in the machine unlearning literature, particularly when analyzing finite-time convergence alongside privacy guarantees~\citep{chourasia2023forget_unlearning, huang2023tight, allouah2024utility, van2025forget}. Under Ass.~\ref{ass:loss_regularity}, let $\vtheta^*$ (resp.~$\vtheta^*_r$) denote the unique minimizer of $\Lcal(\cdot,\D)$ (resp.~$\Lcal(\cdot, \Dr)$), and let $\kappa_{\ell}\coloneqq\beta/\mu$ denote the condition number. We write $\Fcal$ for the class of functions satisfying Ass.~\ref{ass:loss_regularity}.
Finally, note that $\mu$-strong convexity and $L$-Lipschitz continuity jointly imply that the loss has a  bounded effective domain, with diameter at most $ 4 L/\mu$.

In some of our results, we compare \VRU against \emph{arbitrary} retraining/unlearning algorithms.
This level of generality comes at the cost of additional, though still broad, assumptions on the distributions $\Dr$ and $\Df$.
\begin{assumption}[Distributional assumptions]\label{ass:flex}
    The retain distribution $\mathcal{D}_r$ is such that, for any 
    $p \in [0,1]$, there exists a measurable $A \subset \mathbb{R}^s$ with 
    $\mathbb{P}_{\xi \sim \mathcal{D}_r}[\xi \in A] = p$. 
    Moreover, there exist disjoint measurable sets $S_r,S_f\subseteq\mathbb R^s$ with
$\mathbb P_{\xi\sim\mathcal D_r}[\xi\in S_r]=1$ and $\mathbb P_{\xi\sim\mathcal D_f}[\xi\in S_f]=1$,
and $(S_r\cup S_f)^c\neq\emptyset$.
\end{assumption}
Note that Assumption~\ref{ass:flex} holds, for instance, when $\Dr$ is absolutely continuous and the supports of $\Dr$ and $\Df$ are disjoint and do not cover all of $\mathbb{R}^s$. This support separation arises naturally when retain and forget data are drawn from distinct subpopulations, as in class unlearning or toxic content removal.

\paragraph{Class of Unlearning Algorithms.} We formally define the class of unlearning algorithms considered in this work. 

\begin{definition}[Unlearning Algorithm]\label{def:unlearning_algo}
    An unlearning algorithm $\mathcal{U}: \mathbb{N} \times \Fcal \times \Pcal(\mathbb{R}^s) \times \Pcal(\mathbb{R}^s) \to \mathbb{R}^d$ is a (possibly randomized) procedure that takes as input (i) a number of iterations $T$, (ii) a loss function $\ell$, and (iii) access to samples from the retain distribution~$\mathcal{D}_r$, and  (iv) to samples from the forget distribution~$\mathcal{D}_f$. The output of the unlearning algorithm is a model $\bm{\theta}_T = \mathcal{U}(T, \ell, \mathcal{D}_r, \mathcal{D}_f) \in \mathbb{R}^d$. The algorithm is initialized at the optimum $\vtheta^*$ which we omit from the notation.
Initializing at the minimizer of $\Lcal(\cdot,\D)$ is standard in the $(\varepsilon,\delta)-$unlearning literature~\cite{guo2020certified, yi2024scalable, chien2024langevin, van2025forget}. We denote by $\mathbb{U}$ the class of all such unlearning algorithms.
\end{definition}
In contrast, a retraining algorithm $\mathcal{A}: \sN \times (\mathbb{R}^s)^* \to \mathbb{R}^d$ is a (possibly randomized) training procedure that takes as input the retraining time $T$, the retain distribution $\mathcal{D}_r$, and outputs a model $\bm{\theta}_T= \mathcal{A}(T, \mathcal{D}_r) \in \mathbb{R}^d$ 
trained exclusively on the retain data. The initialization of a retraining algorithm is random and will not be made explicit. Similarly, the number of iterations $T$ will be omitted when not of interest, for notation simplicity. We denote by $\mathbb{A}$ the class of all such unlearning algorithms.

We adopt the standard definition of ($\varepsilon, \delta)$-unlearning, first proposed by~\citet{ginart2019making}.
\begin{definition}[$(\varepsilon, \delta)$-unlearning]\label{def:unlearning}
    An unlearning algorithm $\mathcal{U} \in \mathbb{U}$ satisfies $(\varepsilon, \delta)$-unlearning if there exists a retraining algorithm $\mathcal{A}$ such that, for any distributions $(\mathcal{D}_r, \mathcal{D}_f)$, and any measurable subset $S \subseteq \mathbb{R}^d$,
    \begin{align*}
        \mathbb{P}[\Ucal(T, \ell, \Dr, \Df) \in S] &\leq e^{\varepsilon} \cdot \mathbb{P}[\mathcal{A}(\mathcal{D}_r) \in S] + \delta  , \\
        \mathbb{P}[\mathcal{A}(\mathcal{D}_r) \in S] &\leq e^{\varepsilon} \cdot \mathbb{P}[\Ucal(T, \ell, \Dr, \Df) \in S] + \delta .
    \end{align*}
\end{definition}
    In other words, an unlearning method achieves $(\varepsilon, \delta)-$unlearning if the distribution of models that unlearned $\Df$ is close to the distribution of models retrained from scratch on $\Dr$; the latter are perfectly private with respect to $\Df$ since they have never been exposed to it.

    We denote by $\mathbb{U}_{\varepsilon,\delta} \subseteq \mathbb{U}$ the class of all unlearning algorithms satisfying $(\varepsilon, \delta)$-unlearning, and by $\mathbb{U}^r_{\varepsilon,\delta} \subseteq \mathbb{U}_{\varepsilon,\delta}$ the subclass that does not access the forget set during the unlearning procedure. We refer to the latter as forget-set-free methods. 
    We further define the privacy budget $\kdp \coloneqq \varepsilon^{-1}\sqrt{2 \;\text{log}(2.5/\delta)}$ \cite{DP_book}, that links privacy levels to added noise in the algorithm.

\paragraph{Convergence times.} To evaluate the utility of unlearning algorithms, we adopt the standard notion of convergence time. This notion allows to link the largest accepted error of an unlearning procedure with the time it takes to achieve it \cite{allouah2024utility, van2025forget, zou2025certified}.
For a given error threshold $e > 0$ and an unlearning algorithm $\mathcal{U}\in\sU$, we define
\[
    T_e(\ell, \Ucal) \coloneqq \min_{T\in \mathbb{N}} \{T ; \; \mathbb{E}[\Lcal_r(\Ucal(T, \ell, \Dr, \Df)) - \Lcal_{r}(\vtheta^*_r)] \le e \},
\]
where $\Lcal_r(\cdot) \coloneqq \Lcal(\cdot, \Dr)$. Similarly, for a given error threshold $e > 0$ and a retraining algorithm $\mathcal{A}\in\sA$, we define
\[
    T_e(\ell, \Acal) \coloneqq \min_{T\in \mathbb{N}} \{T ; \; \mathbb{E}[\Lcal_r(\Acal(T, \Dr) - \Lcal_{r}(\vtheta^*_r)] \le e \}. 
\]

For our subsequent analysis, we define the error parameter 
\begin{equation}\label{eq:nu_T}
    \nu_T\coloneqq \sqrt{2 h(T,\delta)} L \frac{1+\kl}{\mu\sqrt{T}}\:,
\end{equation}
where $h(T,\delta) \coloneqq 1+ 624 \left(\log(\log(T)) +\log(2/\delta)\right)$. This parameter is used to bound, with high probability, the distance between a \VRU iterate and the optimum $\vtheta_r^*$.

\paragraph{Notation.} We denote by $B(\vtheta, R)$ the closed Euclidean ball of radius $R$ around $\vtheta$, and by $\operatorname{proj}_C(\vx) = \operatorname*{argmin}_{\vy \in C} \norm{\vx - \vy}_2$ the Euclidean projection of $\vx$ onto a closed convex set $C$.

\section{Main Results}
In this section, we first introduce our Variance-Reduced Unlearning (\VRU) algorithm and explain the intuition behind its gradient estimator (Section~\ref{subsec:VRU}). We then establish its convergence rate (Section~\ref{subsec:VRU_speed}), compare it to the state-of-the-art, proving significant speedup when compared with unlearning and retraining methods (Section~\ref{subsec:vs-speed}).  We finally provide a formal separation: \VRU provably outperforms \emph{any} $(\varepsilon, \delta)$-unlearning method that does not access the forget set (Section~\ref{subsec:vs-bound}).
All proofs are deferred to Appendix \ref{app:theo}.

\subsection{Introducing the \VRU Algorithm}\label{subsec:VRU}
We define the variance-reduced unlearning (\VRU) algorithm, and its associated stochastic gradient estimator,
\begin{equation}\label{eq:VRU_loss}
    \nablaT(\vtheta, \xi_r, \xi_f) = \nabla \ell (\vtheta,\xi^r) \underbrace{- \nabla \ell (\vtheta^*,\xi^r) - \tfrac{r_f}{1-r_f} \nabla \ell (\vtheta^*,\xi^f)}_{\text{correction term}} \:,
\end{equation}
where $\xi^f \sim \Df$ and $\xi^r \sim \Dr$ are i.i.d. samples drawn from the forget and retain distributions. 
The intuition behind this gradient estimator is relatively straightforward: to the usual stochastic gradient, we add a correction term with null expectation that significantly reduces the variance.

\paragraph{Structure of the \VRU algorithm.}
Our proposed algorithm operates in two phases. First, we apply the Projected Stochastic Gradient Descent (PSGD, see \eg Alg. 12.4 in \citet{handbook}) algorithm to our gradient estimator $\nablaT$ with decreasing step size $1/(\mu t)$. The projection is done onto the ball $B(\vtheta^*, \frac{r_f}{1-r_f}\frac{L}{\mu})$, which is guaranteed to contain the global optimum $\vtheta^*_r$ (see Lemma~\ref{lemma:close_opt}). Thus, the projection never moves the iterates away from the optimum. Finally, we apply noise to the optimization output
to ensure $(\varepsilon,\delta)-$unlearning.
Rapid convergence to the global minimizer $\vtheta_r^\star$ implies that only a small amount of noise is required: it scales with $r_f$ and decreases as $1/\sqrt{T}$ with the number of optimization steps $T$, as explained in Appendix~\ref{app:theo}.

\paragraph{Unbiasedness of $\,\nablaT(\vtheta).$} As $\vtheta^*$ is the minimizer of the loss on $\D$, at this specific point, the gradients on the retain and forget sets cancel each other out,
\begin{align}
    (1-r_f)&\E{\nabla \ell (\vtheta^*,\xi^r)} = (1-r_f)\nabla \Lcal(\vtheta^*, \Dr)  \\
    &= - r_f \nabla \Lcal(\vtheta^*, \Df)
    = - r_f \E{ \nabla \ell (\vtheta^*,\xi^f)}.\label{eq:unbiased}
\end{align}
This ensures that our correction term has zero expectation, keeping the gradient estimator unbiased.

\paragraph{Reduced variance of $\,\nablaT(\vtheta).$}
A natural approach to approximate $\vtheta^*_r$ starting from $\vtheta^*$ is to apply stochastic gradient descent on $\Dr$ using $\nabla\ell(\vtheta,\xi_r)$. While the expected norm of this gradient is small near $\vtheta^*$, at most $\frac{r_f}{1-r_f} L$ (\Eqref{eq:unbiased}), it may still exhibit significant variance even when $r_f$ is small. This leads to slow convergence when fine-tuning on $\Dr$ alone, a common approach in certified unlearning methods.

The challenge lies in reducing this variance without
having to compute full-batch gradients. 
To this end, we draw on the main idea behind the \textit{SVRG} algorithm \cite{johnson2013accelerating}, which replaces the stochastic gradient with the difference $\nabla\ell(\vtheta,\xi_r) - \nabla\ell(\vtheta^*,\xi_r)$. Since the loss is smooth, this difference has a low variance when $\vtheta$ and $\vtheta^*$ are close. However, this introduces a bias, which \textit{SVRG} corrects by adding a full-batch gradient $\nabla\mathcal{L}_r(\vtheta)$, a computationally expensive operation that must be periodically recomputed as the iterates move away from the anchor.

Our key observation is that we can instead add the stochastic term $-\frac{r_f}{1-r_f}\nabla\ell(\vtheta^*,\xi_f)$, which has the same expectation (\Eqref{eq:unbiased}). As $r_f$ represents a small portion of data, this term has much lower variance than $\nabla\ell(\vtheta^*,\xi_r)$, allowing it to correct the bias without compromising the variance reduction. This yields the \VRU update. Additionally, since $\vtheta^*$ and $\vtheta^*_r$ are close in parameter space (\lemref{lemma:close_opt}), the low-variance gradient information anchored at $\vtheta^*$ stays informative throughout the optimization trajectory. Unlike \textit{SVRG}, no periodic recomputation is needed. This lead to the faster convergence of \VRU iterates, and thus faster unlearning, as quantified in the next subsection.

\begin{algorithm}[t]
\caption[\textbf{VRU}]{\textbf{VRU} (Variance Reduced Unlearning)}
\label{alg:VRU}
\begin{algorithmic}[1]
\REQUIRE Number of iterations $T$, trained model $\vtheta^*$, loss function $\ell$, forget ratio $r_f \in (0,1)$, retain set $\Dr$, forget set $\Df$, privacy budget $\kdp$

\STATE Set $\theta_0 = \theta^*$, compute $\nu_T$  (\Eqref{eq:nu_T}) and $R\coloneqq\frac{r_f}{1-r_f}\frac{L}{\mu}$
\FOR{$t = 0$ to $T-1$}
    \STATE Sample data points: $\xi_t^{r}\sim \Dr$, $\xi_t^{f} \sim \Df$
    \STATE Compute variance-reduced gradient:
    \[
    \tilde{\nabla}_t \gets \nabla \ell(\vtheta_t, \xi_t^{r})
       - \nabla \ell(\vtheta^*, \xi_t^{r})
       - \frac{r_f}{1-r_f} \, \nabla \ell(\vtheta^*, \xi_t^{f}) 
    \]
    \STATE Update parameters and perform projection on the ball: $\vtheta_{t+1} \gets \operatorname{proj}_{\vtheta^*,R)}\left(\vtheta_t - 1/\mu t \, \tilde{\nabla}_t\right)$ \alglinelabel{eq:proj}
\ENDFOR
\STATE Sample $Z\sim\Ncal(0,1)$ 

\STATE  Noise the model to ensure unlearning: \[\tilde{\vtheta}_T = \vtheta_T + \left(\frac{r_f}{1-r_f}\right) \nu_T \kdp Z\] \alglinelabel{eq:proj}

\STATE \textbf{return} Final model $\tilde{\vtheta}_T$
\end{algorithmic}
\end{algorithm}

\subsection{Convergence Speed of the \VRU Algorithm}\label{subsec:VRU_speed}

The following theorem characterizes the complexity of \VRU in terms of the forget-set fraction $r_f$, the error threshold $e$, the privacy budget $\kdp$, and the local loss geometry $\kl$.
\begin{restatable}{theorem}{mainthm}\label{theo:VRU_speed}
     Let $\Fcal$ be the set of $\mu$-strongly-convex, $L$-Lipschitz, and $\beta$-smooth loss functions. Then, for any $\ell\in\Fcal$ and any $e>0$,
    \begin{equation}
            T_e(\ell, \VRU) = \tilde{\Ocal}\left(\kl^3(1+d\kdp^2\log(\frac{1}{\delta})) \frac{e_0}{e}\left(\frac{r_f}{1-r_f}\right)^2 \right)\:.
    \end{equation}

\end{restatable}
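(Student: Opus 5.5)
The plan is to split the expected excess risk of the output $\tilde\vtheta_T$ into an optimization term and a noise term, bound each as a function of $T$, and then invert. By $\beta$-smoothness of $\Lcal_r$ and $\nabla\Lcal_r(\vtheta_r^*)=0$,
\[
\E{\Lcal_r(\tilde\vtheta_T)-\Lcal_r(\vtheta_r^*)}\le \tfrac{\beta}{2}\E{\norm{\tilde\vtheta_T-\vtheta_r^*}^2}
= \tfrac{\beta}{2}\E{\norm{\vtheta_T-\vtheta_r^*}^2} + \tfrac{\beta}{2}\Big(\tfrac{r_f}{1-r_f}\Big)^2\nu_T^2\kdp^2 d ,
\]
since $Z$ is independent of $\vtheta_T$ with zero mean (reading $Z$ as $d$-dimensional standard Gaussian, which gives the factor $d$). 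Substituting $\nu_T^2 = 2h(T,\delta)L^2(1+\kl)^2/(\mu^2T)$, the noise term becomes $\Ocal\big(\beta L^2\kl^2 h(T,\delta)\kdp^2 d\,(r_f/(1-r_f))^2/(\mu^2 T)\big)$. Here $h(T,\delta)=\tilde\Ocal(\log(1/\delta))$ produces the $\log(1/\delta)$ factor in the statement.

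For the optimization term I would use the standard PSGD analysis with step $1/(\mu t)$ on a $\mu$-strongly convex objective. Lemma~\ref{lemma:close_opt} guarantees that the ball $B(\vtheta^*,R)$ contains $\vtheta_r^*$, so projection is nonexpansive toward $\vtheta_r^*$. The key quantity is the second moment $\E{\norm{\tilde\nabla_t}^2}$.
\begin{itemize}
\item Write $\tilde\nabla_t = [\nabla\ell(\vtheta_t,\xi^r_t)-\nabla\ell(\vtheta^*,\xi^r_t)] - \frac{r_f}{1-r_f}\nabla\ell(\vtheta^*,\xi^f_t)$.
\item By smoothness the first bracket has norm at most $\beta\norm{\vtheta_t-\vtheta^*}\le \beta R$.
\item By Lipschitzness the second term has norm at most $\frac{r_f}{1-r_f}L$.
\end{itemize}
Hence $\E{\norm{\tilde\nabla_t}^2}\le G^2 := \Ocal\big((1+\kl)^2 L^2 (r_f/(1-r_f))^2\big)$. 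This is exactly where the variance reduction pays off: $G^2$ scales with $(r_f/(1-r_f))^2$ instead of with $L^2$. The usual recursion $\E{\norm{\vtheta_{t+1}-\vtheta_r^*}^2}\le (1-2/t)\E{\norm{\vtheta_t-\vtheta_r^*}^2}+G^2/(\mu^2t^2)$ then gives $\E{\norm{\vtheta_T-\vtheta_r^*}^2}=\Ocal(G^2/(\mu^2T))$. The optimization term is therefore $\Ocal\big(\beta L^2(1+\kl)^2(r_f/(1-r_f))^2/(\mu^2T)\big)$.

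Adding the two terms gives an excess risk of order $\frac{\beta L^2}{\mu^2}\kl^2\big(1+d\kdp^2 h(T,\delta)\big)(r_f/(1-r_f))^2/T$. I would then express $L^2/\mu$ through the initial error $e_0$. I take $e_0$ to be an a priori scale of the form $\Ocal(L^2/\mu)$, matching the effective-domain diameter $4L/\mu$. This converts $\beta L^2/\mu^2$ into $\kl e_0$, which yields the $\kl^3$ factor. Setting the bound equal to $e$ and solving for $T$ gives the claim, with the $\log\log T$ inside $h$ absorbed into $\tilde\Ocal$.

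The main obstacle is guaranteeing that the noise scale $\nu_T$ certifies $(\varepsilon,\delta)$-unlearning. That step needs a high-probability bound $\norm{\vtheta_T-\vtheta_r^*}\le \frac{r_f}{1-r_f}\nu_T$, obtained from a martingale concentration for PSGD (the $624(\log\log T+\log(2/\delta))$ constant suggests a Freedman-type argument as in Harvey et al.), followed by the Gaussian mechanism with the retraining algorithm defined as $\vtheta_r^*$ plus the same noise. For the convergence-time statement itself, however, only the expectation bounds above are required. The remaining delicate points are keeping the $\vtheta$-dependent part of the gradient variance controlled by the ball radius $R$, and pinning down the convention for $e_0$.
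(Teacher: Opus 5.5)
Your proposal is correct and follows essentially the paper's proof. The shared ingredients are the almost-sure bound $\norm{\nablaT_t}\le(1+\kl)\frac{r_f}{1-r_f}L$ on the projection ball (Lemma~\ref{lemma:small_grad}), the Rakhlin et al.\ PSGD analysis with step $1/(\mu t)$ for the optimization term plus $\frac{\beta}{2}$ times the noise second moment, the implicit convention $e_0\asymp L^2/\mu$, inversion with the $\log\log T$ absorbed into $\tilde{\Ocal}$, and a Freedman-type high-probability bound (the paper uses Rakhlin et al.'s Proposition~1, the source of the constant $624$) combined with the Gaussian mechanism and a sum of the two failure probabilities. The remaining differences (applying smoothness at $\vtheta_r^*$ rather than at $\vtheta_T$, re-deriving the recursion instead of citing it) are cosmetic, though you should state that the recursion relies on unbiasedness of $\nablaT$, which follows from first-order optimality of $\vtheta^*$ on $\D$.
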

 \paragraph{Proof sketch.} We start by upper-bounding the Lipschitz constant of $\nablaT$ on a small ball around $\vtheta^*$, then leverage this regularity to measure the speed of almost-sure  convergence of the PSGD's final iterate to the global minimum $\vtheta^*_r$, before applying the Gaussian mechanism \cite{DP_book} to ensure unlearning. See App. \ref{app:theo} for the complete proof.

One highlight of this result is its  quadratic dependence on the forget fraction, $\Ocal(r_f^2)$, coupled with an $\Ocal(1/e)$ dependency on the excess risk.
Since $r_f$ is typically  small, as low as $\Ocal(1/n)$ for point-wise unlearning, this scaling translates into a substantial reduction in computational cost compared to full retraining or existing unlearning algorithms (see the next subsection).
The result is stated in $\tilde{\Ocal}$ rather than in $\Ocal$ due to an additional  factor in $\log \log T$ that arises when controlling the final iterate's distance to the global optimum with high probability.
Regarding the dependence on the condition number $\kappa_{\ell}$, we emphasize that this term reflects the \emph{local} geometry of the loss landscape. 
Indeed, the projection step (Line~\ref{eq:proj} in \VRU) confines the iterates to a ball of radius  $\frac{r_f}{1-r_f}\frac{L}{\mu}$ around  $\vtheta^*$, so the rate
is governed by the local curvature rather than the global condition number. Furthermore, the analysis is worst-case: in practice 
the global Lipschitz constant $L$ can effectively be replaced by the magnitude of the average forget set gradient $\norm{\nabla\Lcal(\vtheta^*,\Df)}$, yielding a tighter, bound that depends 
on the specific samples to unlearn. We discuss implementation details for efficiently exploiting these properties in Section \ref{subsec:VRU-practice}. 

\subsection{Improvement Over Existing Methods}\label{subsec:vs-speed}
The best previously known convergence rate for $(\varepsilon, \delta)$-unlearning of functions in $\Fcal$ is achieved 
by the ``Noise and Fine-Tune'' (\emph{NFT}) algorithm \citep{DescentToDelete}, which also exhibits 
a quadratic dependence on the forget fraction, $\mathcal{O}(r_f^2)$
\citep{van2025forget}, but has a worse dependency in $e$ of $\Ocal(1/e^2)$. This prevents \emph{NFT} from outperforming retraining when a low excess risk is required.
Theorem~\ref{theo:VRU_speed} 
establishes that \VRU improves this dependency to $\mathcal{O}(1/e)$, meaning that the speedup \VRU offers when compared to retraining does not decrease when dealing with smaller values of $e$. As a consequence, \VRU has an improved bound over \emph{NFT} for small values of $e$:
\begin{corollary}\label{corr:vru_vs_nft}
    Let $e>0$. Let $T^{\text{max}}_e(\VRU)$ (resp. $T^{\text{max}}_e(\mathit{NFT})$) be the best known asymptotical upper-bound on $T_e(\VRU)$ (resp. $T_e(\mathit{NFT})$). Then,
    \begin{equation}
        \frac{T^{\text{max}}_e(\VRU)}{T^{\text{max}}_e(\mathit{NFT})} = \Theta\left(\kl^3\log(1/\delta)\frac{e}{e_0}\right) \,.
    \end{equation}
\end{corollary}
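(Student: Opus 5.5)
The plan is to compute the ratio of the two explicit upper bounds directly. For \VRU we take Theorem~\ref{theo:VRU_speed}, which gives
\[
T^{\text{max}}_e(\VRU)=\tilde{\Ocal}\Big(\kl^3\big(1+d\kdp^2\log(\tfrac1\delta)\big)\tfrac{e_0}{e}\big(\tfrac{r_f}{1-r_f}\big)^2\Big).
\]
For \emph{NFT} we import the best known bound from \citet{van2025forget}, which we expect to read
\[
T^{\text{max}}_e(\mathit{NFT})=\Theta\Big(\big(1+d\kdp^2\big)\tfrac{e_0^2}{e^2}\big(\tfrac{r_f}{1-r_f}\big)^2\Big)
\]
up to factors independent of $e$, $\kl$ and $\delta$. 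This bound is quadratic in $r_f$, with an $\Ocal(1/e^2)$ dependence and the same Gaussian-mechanism dependence on $d\kdp^2$.

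The first step is to write both bounds in a common normalized form. We collect the factors shared by both bounds: the forget-ratio factor $(r_f/(1-r_f))^2$, the dimension–privacy factor $d\kdp^2$, and the constants coming from $L,\mu$. Dividing the two expressions, these common factors cancel. What survives is $\kl^3$ from the local smoothness of $\nablaT$ in the PSGD analysis, the extra $\log(1/\delta)$ (hidden in $h(T,\delta)$ through $\nu_T$), and the ratio $\frac{e_0/e}{(e_0/e)^2}=\frac{e}{e_0}$.

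The second step is to justify that the $\tilde{\Ocal}$ in Theorem~\ref{theo:VRU_speed} can be treated as $\Theta$ in the ratio. The hidden factor is $\log\log T$ from $h(T,\delta)$. Since ``best known asymptotic upper bound'' refers to the stated expressions, I will absorb the $\log\log$ term into the asymptotic notation, or equivalently state the comparison at the level of the displayed bounds. I will also check that in the regime $d\kdp^2\gtrsim1$ the term $1+d\kdp^2\log(1/\delta)$ is $\Theta(d\kdp^2\log(1/\delta))$ while the \emph{NFT} term is $\Theta(d\kdp^2)$, so the leftover factor is exactly $\log(1/\delta)$.

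The main obstacle is matching the precise form of the \emph{NFT} bound in \citet{van2025forget}. That bound must carry the same $(r_f/(1-r_f))^2$ and $d\kdp^2$ factors and no additional $\kl$ power, since otherwise the surviving exponent of $\kl$ would differ from $3$. I would first restate that bound in the present notation, tracking how its constants depend on $L,\mu,\beta$, and then carry out the cancellation.
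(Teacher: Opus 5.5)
Your proposal is correct and follows the paper's own route: the paper proves the corollary in one line, by dividing the bound of Theorem~\ref{theo:VRU_speed} by the \emph{NFT} bound of Theorem~3 in \citet{van2025forget}, and the cancellations you describe are exactly what that division amounts to. Your extra remarks go beyond what the paper makes explicit: absorbing the $\log\log T$ factor hidden in $\tilde{\Ocal}$, and needing $d\kdp^2\gtrsim 1$ so that $(1+d\kdp^2\log(1/\delta))/(1+d\kdp^2)=\Theta(\log(1/\delta))$. You also rightly flag that the surviving $\kl^3$ exponent depends on the cited \emph{NFT} bound carrying no $\kl$ factor.
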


Beyond improving upon existing unlearning methods, \VRU also compares favorably to retraining when the forget fraction is small:

\begin{corollary}
    \label{cor:beat_retrain}
    Under Assumption~\ref{ass:flex}, let $0<e<e_0$ and $\Acal\in\sA$. Then,
    \begin{equation}
        \frac{T_e(\VRU)}{T_e(\Acal)} = \tilde{\Ocal}\left(\kl^3\left(1+ d\kdp^2\log(\frac{1}{\delta})\right) \left(\frac{r_f}{1-r_f}\right)^2 \right)\,.
    \end{equation}
\end{corollary}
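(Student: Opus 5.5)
Corollary~\ref{cor:beat_retrain} follows by dividing the upper bound of Theorem~\ref{theo:VRU_speed} by a lower bound on $T_e(\Acal)$ valid for every retraining algorithm $\Acal\in\sA$. The theorem already gives
\[
T_e(\VRU)=\tilde{\Ocal}\Bigl(\kl^3\bigl(1+d\kdp^2\log(1/\delta)\bigr)\tfrac{e_0}{e}\bigl(\tfrac{r_f}{1-r_f}\bigr)^2\Bigr).
\]
It therefore suffices to show that $T_e(\Acal)=\Omega(e_0/e)$ uniformly over $\Acal\in\sA$ when $0<e<e_0$. Here $e_0$ is read as the initial excess-risk scale, of order $L^2/\mu$. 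The $e_0/e$ factors then cancel and the stated ratio remains.

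The lower bound on retraining is an information-theoretic sample-complexity argument, and this is where Assumption~\ref{ass:flex} is used. A retraining algorithm sees only samples from $\Dr$ and has no access to $\vtheta^*$, since its initialization is random. After $T$ iterations it has seen at most $T$ samples (one per iteration, or a constant number). I would construct two retain distributions $\Dr,\Dr'$ that differ by moving a mass $p$ of the retain distribution. The first part of Assumption~\ref{ass:flex} guarantees a measurable set $A$ with $\mathbb P_{\Dr}[A]=p$ exactly. The mass from $A$ is relocated to a point in $(S_r\cup S_f)^c$, which the second part guarantees is nonempty. The loss $\ell$ in $\Fcal$ is chosen, for example as a quadratic $\frac{\mu}{2}\|\vtheta-c(\xi)\|^2$ with $c$ bounded so that the loss is Lipschitz on the bounded domain. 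With this choice the two retain minimizers are separated by $\Theta(pL/\mu)$, while the total variation between $T$-sample product laws is at most $Tp$.

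A Le Cam two-point argument then finishes the bound. With $p\asymp 1/T$, no algorithm can distinguish the two instances with constant probability. Hence on one of them the excess risk is at least $\mu\,(pL/\mu)^2/8\asymp e_0/T^2$. A sharper route is a Fano or Bernoulli-mean estimation lower bound, which gives the standard $\Omega(L^2/(\mu T))=\Omega(e_0/T)$ rate for stochastic strongly convex optimization. Inverting this gives $T_e(\Acal)=\Omega(e_0/e)$. The disjoint supports make sure the relocated mass does not interfere with $\Df$, so the forget distribution stays fixed and the comparison is fair.

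The main obstacle is obtaining the $1/T$ rate rather than the weaker $1/T^2$ rate from the naive two-point bound. I would first try the classical construction in which $\xi$ takes two values with probabilities $\tfrac12\pm\gamma$. The first part of Assumption~\ref{ass:flex} realizes these probabilities as sets of exact mass. The KL divergence between the two $T$-sample laws is $O(T\gamma^2)$, so setting $\gamma\asymp T^{-1/2}$ yields an excess risk of order $\mu(\gamma L/\mu)^2=\Omega(L^2/(\mu T))$. Dividing the bounds then gives the corollary. The $(1-r_f)$ and $\log\log$ factors are absorbed into $\tilde{\Ocal}$, and the requirement $e<e_0$ ensures we are in the regime where the lower bound is nontrivial.
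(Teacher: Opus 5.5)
Your proposal is correct and follows the paper's proof: the paper also divides the uniform upper bound of Theorem~\ref{theo:VRU_speed} by the retraining lower bound $T_e(\Acal)=\Omega(e_0/e)$, so the $e_0/e$ factors cancel. The difference is that the paper takes this lower bound from Lemma~\ref{lemma:scratch_speed} (cited from prior work) instead of deriving it, while your Bernoulli/Le Cam sketch reconstructs it; since $\Dr$ is fixed and the supremum is over $\ell\in\Fcal$, build the two hard instances as two losses defined through sets of $\Dr$-mass $\tfrac12\pm\gamma$, and drop the initial two-point attempt that moves mass to $(S_r\cup S_f)^c$.
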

Corollary~\ref{cor:beat_retrain} addresses a natural question in machine unlearning: under what conditions can unlearning methods outperform retraining? The challenge lies in achieving favorable dependence on both the target excess risk $e$ and the forget fraction $r_f$ simultaneously. Retraining algorithms exhibit an $\mathcal{O}(1/e)$ dependence on excess risk, whereas \emph{NFT}, despite its advantageous $\mathcal{O}(r_f^2)$ scaling, suffers from $\mathcal{O}(1/e^2)$. This means \emph{NFT} loses its advantage over retraining in the low-error regime. By leveraging forget set gradients, \VRU combines the $\mathcal{O}(r_f^2)$ scaling of unlearning methods with the $\mathcal{O}(1/e)$ dependence of retraining, substantially extending the regime of $(e, r_f)$ pairs for which unlearning offers meaningful computational gains.

\subsection{Improvement Over Any Forget-Set-Free Method}\label{subsec:vs-bound}
In the following, we present a result which we argue is a fundamental performance characterization of \VRU over \emph{any} first-order forget-set-free $(\varepsilon, \delta)$-unlearning method.

A known limitation of first-order algorithms \emph{without access to the forget set} is their inability to outperform retraining from scratch for a certain range of excess risks $e$ \cite{van2025forget}. Crucially, \VRU sidesteps this barrier by incorporating forget set gradients at each iteration. Theorem~\ref{theo:VRU-beats} formalizes the resulting separation.

When optimizing a function in a certain set (\ie $\Fcal$), one wants algorithms guaranteed to perform well regardless of the function chosen from the set. This motivates studying not how a specific algorithm performs when ran on a specific loss, but rather how fast can a specific algorithm optimize any loss in the class. We thus define the worst-case convergence time as a measure of the performance of an algorithm 
\[
T_e(\Ucal) \coloneqq \sup_{\:\:\ell\in\Fcal}T_e \: (\ell,\Ucal) \:.
\]
With this quantity introduced, we are ready to showcase the performance gap between first-order $(\varepsilon, \delta)$-unlearning methods, depending on whether they leverage the forget set.

\begin{theorem}[Fundamental gain from forget set access]\label{theo:VRU-beats}
Assume Assumptions \ref{ass:loss_regularity} and \ref{ass:flex} hold.

For any $\delta_{min}>0$, there is a constant $c>0$ such that, for any forget-set-free unlearning algorithm $\Ucal\in\sU_{\varepsilon,\delta}^r$, if $\delta \in[\delta_{min},\varepsilon]$ and $e< c \kdp^2 \left(r_f/(1-r_f)\right)^2 e_0$,
\[
\frac{T_e(\VRU)}{T_e(\Ucal)} = \tilde{\Ocal}\left(d\kdp^2 \log\left(1/\delta\right)\kl^3\left(\frac{r_f}{1-r_f}\right)^2\right)\,.
\]

\end{theorem}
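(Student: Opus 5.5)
The plan is to sandwich the ratio between the upper bound on $T_e(\VRU)$ from Theorem~\ref{theo:VRU_speed} and a lower bound on $T_e(\Ucal)$, valid for every forget-set-free $\Ucal\in\sU^r_{\varepsilon,\delta}$, of the same order $\Omega(e_0/e)$ as retraining.

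\textbf{Upper bound.} Taking the supremum over $\ell\in\Fcal$ in Theorem~\ref{theo:VRU_speed} is harmless, since its constants depend only on $(\mu,\beta,L)$. This gives
\[
T_e(\VRU)=\tilde{\Ocal}\left(\kl^3\bigl(1+d\kdp^2\log(1/\delta)\bigr)\tfrac{e_0}{e}\left(\tfrac{r_f}{1-r_f}\right)^2\right).
\]
Because $\delta\le\varepsilon$ and $\delta\ge\delta_{min}$, the quantity $d\kdp^2\log(1/\delta)$ is bounded below by a constant. The factor $1+d\kdp^2\log(1/\delta)$ can therefore be absorbed into $d\kdp^2\log(1/\delta)$, which produces the stated numerator.

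\textbf{Lower bound via a reduction.} The core step is to show that a forget-set-free $(\varepsilon,\delta)$-unlearning algorithm is essentially no faster than retraining. The reduction goes as follows.
\begin{enumerate}
\item A forget-set-free $\Ucal$ only sees samples of $\Dr$ and the initialization $\vtheta^*$.
\item I use Assumption~\ref{ass:flex} to build two instances, $(\Dr,\Df)$ and $(\Dr,\Df')$, with the same retain distribution and different forget distributions. The unused region $(S_r\cup S_f)^c$ provides data points that $\Df'$ can charge, and the splitting property of $\Dr$ lets me tune the sizes.
\item The two instances have different initial optima, $\vtheta^*$ and $\vtheta'^*$, but the same target $\vtheta^*_r$.
\item The $(\varepsilon,\delta)$ condition ties both outputs to the same retraining distribution $\Acal(\Dr)$. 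Hence $\Ucal$ must effectively "forget" its initialization.
\end{enumerate}

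I then follow \citet{van2025forget} and turn this into a statement about initialization. Group privacy and the indistinguishability constraint force the output of $\Ucal$ to carry noise or information loss of scale about $\kdp\cdot\frac{r_f}{1-r_f}\frac{L}{\mu}$, the size of the shift $\|\vtheta^*-\vtheta^*_r\|$ from Lemma~\ref{lemma:close_opt}. Equivalently, $\Ucal$ can be simulated by a retraining algorithm from a random initialization in a ball of that radius. When the target error is below that scale, i.e. $e<c\,\kdp^2(r_f/(1-r_f))^2e_0$, the initial advantage is useless. Standard first-order lower bounds for stochastic strongly convex optimization over $\Fcal$ then give $T_e(\Ucal)=\Omega(e_0/e)$, the same rate as any $\Acal\in\sA$.

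This is where the condition on $e$ and the constant $c$ come from. The constant $c$ depends on $\delta_{min}$ through the $\log(1/\delta)$ terms, which explains the quantifier order in the statement.

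\textbf{Conclusion.} Dividing the upper bound by $T_e(\Ucal)=\Omega(e_0/e)$ cancels $e_0/e$ and yields the claimed ratio. Alternatively, if Corollary~\ref{cor:beat_retrain}'s lower bound on $T_e(\Acal)$ is available, I compose it with the reduction $T_e(\Ucal)\ge T_e(\Acal_\Ucal)$.

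\textbf{Main obstacle.} The hard step is the lower bound, specifically making rigorous that forget-set-freeness plus $(\varepsilon,\delta)$-unlearning forces $\Ucal$ to be as slow as retraining in the regime $e\lesssim\kdp^2(r_f/(1-r_f))^2e_0$. I would first try to import the corresponding theorem of \citet{van2025forget} directly and only verify that Assumption~\ref{ass:flex} supplies the instance pairs it needs. Failing that, I would construct quadratic-plus-Huberized losses that are $L$-Lipschitz on the bounded domain and whose forget data sits in $(S_r\cup S_f)^c$, and run a two-point testing (Le Cam) argument.
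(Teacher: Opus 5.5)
Your plan is essentially the paper's proof. The paper factors $T_e(\VRU)/T_e(\Ucal)=\bigl(T_e(\VRU)/T_e(\Acal)\bigr)\bigl(T_e(\Acal)/T_e(\Ucal)\bigr)$ with $\Acal$ the PSGD retraining. It bounds the first factor by Corollary~\ref{cor:beat_retrain}, i.e.\ Theorem~\ref{theo:VRU_speed} divided by $T_e(\Acal)=\Theta(e_0/e)$, and the second by $\Ocal(1)$ via the imported Theorem~\ref{theo:lbunlearn} (Theorem~2 of \citet{van2025forget}). That import is exactly your first choice, so your Le Cam/reduction sketch is never needed.

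One correction on the final form. The range $\delta\in[\delta_{min},\varepsilon]$ does not keep $d\kdp^2\log(1/\delta)$ away from $0$: it vanishes as $\delta\to 1$, which $\delta\le\varepsilon$ allows once $\varepsilon$ is near or above $1$. So dropping the ``$1+$'' needs an extra condition such as $\varepsilon\le 1$ and $\delta\le 1/2$; the paper's own proof in fact ends with the factor $1+d\kdp^2\log(1/\delta)$. Both you and the paper also tacitly need $e<e_0$ to invoke Theorem~\ref{theo:lbunlearn} and Lemma~\ref{lemma:scratch_speed}.
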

\begin{corollary}
    Under the assumption of \theoref{theo:VRU-beats}, for any forget-set-free unlearning algorithm $\Ucal\in\sU_{\varepsilon,\delta}^r$,
    \[
    \liminf_{r_f\to 0} \frac{T_e(\VRU)}{T_e(\Ucal)} = 0 \,.
    \]
\end{corollary}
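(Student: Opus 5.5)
The statement to prove is the Corollary right after Theorem~\ref{theo:VRU-beats}: $\liminf_{r_f\to 0} T_e(\VRU)/T_e(\Ucal)=0$ for every forget-set-free $\Ucal\in\sU^r_{\varepsilon,\delta}$. The plan is to read it off Theorem~\ref{theo:VRU-beats} by fixing every parameter except $r_f$ and letting $r_f\to 0$.

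Concretely, fix $\varepsilon$, $\delta\in[\delta_{min},\varepsilon]$, $d$, $\mu$, $\beta$, $L$ and $e$. Then $\kdp$, $\kl$ and $\log(1/\delta)$ do not depend on $r_f$. The key points are two.

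\textbf{The threshold hypothesis on $e$.} The condition $e< c\kdp^2\left(r_f/(1-r_f)\right)^2 e_0$ involves $r_f$ and shrinks as $r_f\to 0$, so it cannot simply be assumed for small $r_f$. I will work with the ratio $e/e_0$, which is what the analysis actually controls, and treat it as fixed. Since $\delta_{min}$ is fixed, the constant $c$ provided by the theorem is also fixed.

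I expect that $e_0$, the initial excess risk of $\vtheta^*$ on $\Lcal_r$, scales like $(r_f/(1-r_f))^2 L^2/\mu$ by Lemma~\ref{lemma:close_opt} and smoothness. In the worst case over $\Fcal$ this order is attained. Taking $e$ proportional to $e_0$, say $e=\tfrac{c}{2}\kdp^2(r_f/(1-r_f))^2 e_0$ in the regime where this is below $e_0$, makes the hypothesis hold along a sequence $r_f\to 0$. A $\liminf$ statement only needs one such sequence, so it suffices to exhibit the threshold being met along it.

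\textbf{The ratio bound.} Along that sequence, Theorem~\ref{theo:VRU-beats} gives
\[
\frac{T_e(\VRU)}{T_e(\Ucal)}\le C\, d\kdp^2\log(1/\delta)\kl^3\left(\frac{r_f}{1-r_f}\right)^2\cdot \mathrm{polylog}.
\]
The $\tilde{\Ocal}$ hides $\log\log T$ factors, coming from $h(T,\delta)$ in \Eqref{eq:nu_T}. Since $T_e(\VRU)=\tilde{\Ocal}(e_0/e\cdot r_f^2)$ by Theorem~\ref{theo:VRU_speed}, with $e_0/e$ fixed, $T$ stays bounded and these log factors stay bounded. The right-hand side is therefore $O(r_f^2)\to 0$, and because the ratio is nonnegative, its $\liminf$ is $0$.

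The main obstacle is the interplay between the $r_f$-dependent threshold and the meaning of $e$ (absolute versus relative to $e_0$). If one insists on $e$ fixed in absolute terms with $e_0$ independent of $r_f$, the hypothesis eventually fails. I would resolve this by making explicit the convention that $e/e_0$ is held fixed, or equivalently by choosing $e$ along the sequence as above. I would also check that the hidden constants in $\tilde{\Ocal}$ are uniform in $r_f$, which follows because they depend only on $\kl$, $\delta$ and $\delta_{min}$.
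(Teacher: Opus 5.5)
Your argument is the paper's own implicit one: specialize Theorem~\ref{theo:VRU-beats} and let $(r_f/(1-r_f))^2\to 0$. Your explicit choice $e=\tfrac{c}{2}\kdp^2\left(r_f/(1-r_f)\right)^2 e_0$ is exactly what is needed to keep that theorem's hypothesis in force as $r_f\to 0$, a point the paper leaves unaddressed.

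You should, however, drop the alternative ``convention that $e/e_0$ is held fixed'', which is not equivalent to that choice. In the paper $e_0$ does not depend on $r_f$: it is the retraining scale of Lemma~\ref{lemma:scratch_speed}, of order $L^2/\mu$ as the Rakhlin bound in the proof of Theorem~\ref{theo:VRU_speed} shows, and not the initial excess risk of $\vtheta^*$. So fixing $e/e_0$ eventually violates $e<c\kdp^2\left(r_f/(1-r_f)\right)^2e_0$. Likewise, what keeps $T_e(\VRU)$ and the hidden logarithmic factors bounded under your choice is that $(e_0/e)\left(r_f/(1-r_f)\right)^2=2/(c\kdp^2)$ is constant, not that $e_0/e$ is fixed.
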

This theorem has important implications, as it characterizes the complexity gain achieved by relying on the forget set, while comparing the speed between first-order $(\varepsilon, \delta)$-unlearning algorithms. More precisely, it compares the speed of one specific algorithm that uses the forget set, \VRU, to the speed of any that does not. Thus, in the error regime described in the theorem, any first-order algorithm that does not rely on the forget set will perform worse than \VRU on the hardest losses in $\Fcal$. This represents an important advantage in the typical use-cases of unlearning, characterized by small values of $r_f$ and $e$.
\subsection{Discussion}
Recent work by \citet{mavrothalassitis2025ascent} suggests that standard descent-ascent strategies (combining gradient ascent on $\Df$ with gradient descent on $\Dr$) may degrade performance relative to the original model $\vtheta^*$ and fail to converge to the optimum $\vtheta^*_r$. However, the \VRU algorithm lies outside the scope of this negative result due to structural differences: unlike the analyzed methods, \VRU incorporates gradients computed at $\vtheta^*$ and utilizes a three-terms update rule. More precisely, \citet{mavrothalassitis2025ascent} demonstrate that for specific ranges of values of $r_f$ and $e$, descent-ascent cannot outperform retraining from scratch in regularized logistic regression settings. This stands in direct contrast to our Corollary \ref{cor:beat_retrain}, which establishes values of $r_f$ for which \VRU outperforms retraining, regardless of $e$.

\section{Experiments}\label{sec:exp}
We empirically evaluate \VRU against certified unlearning algorithms, empirical methods, and retraining baselines. Section~\ref{subsec:methods} first describes all compared methods, and Section~\ref{subsec:VRU-practice} details how to efficiently implement \VRU in practice. Section~\ref{subsec:certified} then compares our approach to certified methods and retraining baselines (\Figref{fig:cvx}). Finally, Section~\ref{subsec:empirical} benchmarks against empirical unlearning algorithms, evaluating privacy leakage via membership inference attacks, as well as utility (\Figref{fig:empirical}). To compare \VRU against other methods and assess the validity of our theory without introducing unnecessary complexity, we consider a logistic regression task that ensures strong convexity.

\subsection{Compared Methods}\label{subsec:methods}
To evaluate the performance of \VRU relative to existing approaches, we compare with the $(\varepsilon,\delta)$ literature, the empirical literature, and the retraining one. We choose the ``Noise and Fine-Tune" (\textbf{NFT}) \cite{DescentToDelete} algorithm as representative of the $(\varepsilon,\delta)-$unlearning methods. \textbf{NFT} achieves the best known utility-privacy tradeoff on losses in $\Fcal$, as quantified in recent studies \cite{allouah2024utility, van2025forget}. For empirical methods, we evaluate against: the \textbf{SCRUB} algorithm \cite{kurmanji2024towards_unbounded}, which alternates between maximizing the KL divergence on the forget set relative to the original model (the teacher) and minimizing the divergence from the teacher on the retain set, combined with a data fidelity term; the \textbf{NegGrad+} baseline \cite{kurmanji2024towards_unbounded} that alternates between gradient ascent steps on $\Df$ and descent steps on $\Dr$, always concluding with descent on $\Dr$ to preserve utility;  the \textbf{Fine-Tune} baseline that only performs gradient descent on the retain set. Finally, we use the \textbf{GD}, \textbf{SGD} and \textbf{SVRG} \cite{johnson2013accelerating} methods as retraining baselines.

In the following experiments, the various methods are always compared with an equal computational budget, measured in the number of sample gradients. For instance, as \VRU requires several gradient passes at each batch, it has a larger per-epoch computational cost than \textbf{NFT} and is thus ran for less epochs.

\subsection{Implementing the \VRU Method in Practice.}\label{subsec:VRU-practice}

When implementing \VRU, several elements can be adapted to improve efficiency. While some conservative choices in Algorithm~\ref{alg:VRU} ensure worst-case convergence guarantees, loss-specific adjustments can be made at run time without compromising on $(\varepsilon,\delta)$-unlearning. First, since the forget set is typically small, replacing the stochastic forget gradient in \Eqref{eq:VRU_loss} with a \emph{full-batch} gradient $\nabla\Lcal(\vtheta^*, \Df)$ computed once before optimization begins reduces computational cost whenever the unlearning computational budget exceeds roughly $1/r_f<1$ fine-tuning epochs on the retain set. This substitution preserves unbiasedness and further reduces variance, as the full-batch gradient is deterministic. Additionally, we can use the gradient's norm $\norm{\nabla\Lcal(\vtheta^*, \Df)}_2$ to replace the Lipschitz constant $L$ in the algorithm, updating it to Alg \ref{alg:VRU-exp},
as proven is App. \ref{app:VRU-exp}. This is advantageous because $L$ is typically large and NP-hard to compute for neural networks~\citep{virmaux2018lipschitz}. 
By showing that the gradient norm suffices, we allow for both wider applicability and faster convergence. Further details about how to implement \VRU in practice are available in App. \ref{app:VRU-exp}.

Beyond these run-time adjustments that avoid reliance on $L$ and improve convergence speed, \VRU has several built-in advantages over its empirical peers, particularly those relying on forget set gradient ascent.

First, \VRU does not require any hyperparameter to control the strength of gradient ascent. Empirical methods such as \textbf{SCRUB} or \textbf{NegGrad+} use biased gradient estimators: the optimum $\vtheta^*_r$ does not represent a stationary point for them, as the average gradient on $\Df$ is generally non-null at this point. Consequently, they must weigh their forget set gradient ascent with a carefully tuned rescaling factor in order to approximate the optimum a time-consuming process. \VRU sidesteps this issue entirely, as its gradient expectation is null at $\vtheta^*_r$, and it has no tunable ascent hyperparameter. This stability guarantee enables convergence to $\vtheta^*_r$ with arbitrarily high probability, unlike other ascent-based methods, which are known for their instability and potential for divergence. In contrast, running \VRU longer always decreases the expected distance to the optimum.

Second, the only choice one must make when implementing \VRU is the privacy budget $\kdp$. A larger value of $\kdp$ yields stronger privacy guarantees but reduced utility. Importantly, this choice can be made \emph{a posteriori}: because the noising step is the final component of the algorithm, practitioners can decouple the optimization from the privacy decision and easily simulate several levels of noise without retraining, to find their application-specific sweetspot.
\begin{figure}[t]
    \centering
    \includegraphics[width=0.5\textwidth]{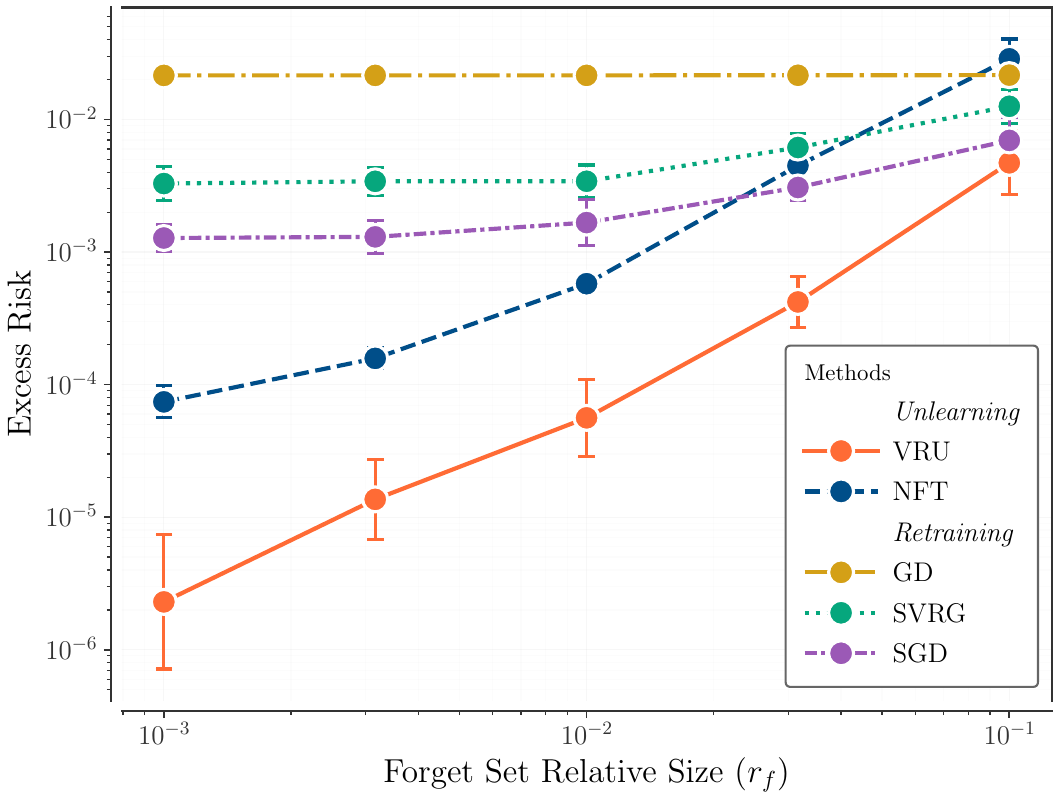}
    \caption{Excess risk of certified unlearning and retraining methods for 
    varying forget fractions $r_f$, under fixed computational budget (10 epochs) 
    and privacy budget ($\kdp = 1$). Results are averaged over 30 runs; error bars indicate $\pm 1$ standard deviation. \VRU achieves the 
    lowest excess risk across all tested $r_f$ values, with gains exceeding an 
    order of magnitude for $r_f < 10^{-2}$.}
    \label{fig:cvx}
\end{figure}
\subsection{Comparing to Certified Approaches}\label{subsec:certified}
We first evaluate $(\varepsilon, \delta)$-unlearning methods alongside retraining 
baselines, which offer perfect privacy by training exclusively on 
$\mathcal{D}_r$. We consider a logistic regression task with cross-entropy loss 
and $\ell_2$ regularization to ensure strong convexity, using the Digit 
dataset~\citep{Digit}. Full experimental details are provided in Appendix~\ref{app:exp_details}.

\paragraph{Setup.} We measure the excess risk achieved by each method across 
$5$ values of $r_f$ spread logarithmically between \(10^{-3}\) and $10^{-1}$, under 
a computational budget equivalent to $10$ epochs of retraining from scratch with SGD and a 
privacy budget of $\kdp = 1$. For each trial, the forget set is selected 
uniformly at random and re-sampled across seeds. Results are averaged over 30 
seeds; error bars in Figure~\ref{fig:cvx} indicate $\pm 1$ 
standard deviation. Since computing the bounded sensitivity~\citep{DP_book} is 
intractable in most experimental settings, we measure it directly and provide it to all 
methods requiring it, ensuring privacy guarantees. We discuss practical 
implementation details for \VRU in Section~\ref{subsec:VRU-practice}.

\paragraph{Results.} Figure~\ref{fig:cvx} shows that for $r_f \leq 0.1$, 
\VRU achieves a lower excess risk than all competing methods, with 
the performance gap widening as $r_f$ decreases, nearing two orders of 
magnitude at $r_f = 10^{-3}$. This behavior aligns with our theoretical 
predictions as we have a better dependence on $e$ than \textbf{NFT}, making smaller values of $e$ reachable under a given computational constraint, in the error regime described in Corr. \ref{corr:vru_vs_nft}.

\subsection{Comparing to Empirical Approaches}\label{subsec:empirical}
\begin{figure}[t]
    \centering
    \includegraphics[width=0.5\textwidth]{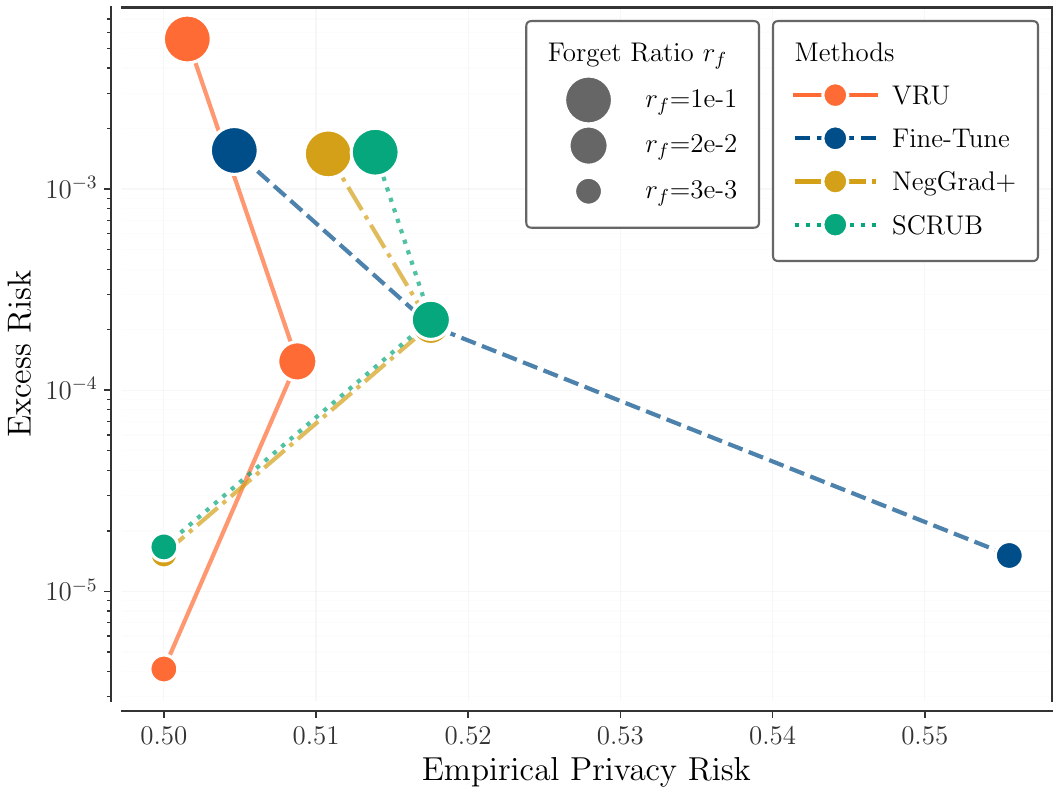}
    \caption{Privacy-utility trade-off under fixed computational budget (5 
    epochs). Each point represents one method at a given $r_f$ value. 
    \textbf{Excess risk} (y-axis): lower is better. \textbf{Empirical privacy 
    risk} (x-axis): MIA accuracy, 50\% indicates perfect unlearning. 
    The lower-left region represents the ideal trade-off.}
    \label{fig:empirical}
\end{figure}

Unlike certified methods, empirical unlearning algorithms lack formal privacy 
guarantees, necessitating empirical evaluation of privacy leakage. Following 
standard practice~\citep{carlini2022membership, hayes2024false_sense}, we 
measure privacy risk via membership inference attacks (MIAs), which assess 
whether an adversary can distinguish forgotten samples from unseen test 
samples, a successful distinction indicating incomplete unlearning. Specifically, 
we implement U-LiRA~\citep{hayes2024false_sense}, the unlearning-adapted variant 
of the LiRA attack~\citep{carlini2022membership}. We report MIA accuracy as our 
measure of \emph{empirical privacy risk}: an accuracy of 50\% corresponds to 
random guessing (perfect unlearning), while higher values indicate privacy 
leakage. Implementation details are provided in Appendix~\ref{app:exp_details}.

\paragraph{Setup.} We evaluate all methods under a computational budget of 5 
epochs across forget fractions $r_f \in \{3\times 10^{-3}, 2\times10^{-2},  
10^{-1}\}$. For each method, we report both the excess risk 
$\mathcal{L}_r(\vtheta) - \mathcal{L}_r(\vtheta^*_r)$ and the empirical privacy risk 
(MIA accuracy). Results are averaged over 3 independent runs.

\paragraph{Results.} Figure~\ref{fig:empirical} presents the privacy-utility 
trade-off for each method, where the lower-left corner represents the ideal 
outcome (low excess risk, low privacy leakage). 

For all tested values of $r_f$, 
\VRU achieves the lowest empirical privacy risk among all methods.
We observe that MIA accuracy remains close to 50\% for most methods across 
all settings. This reflects a known limitation of membership inference 
attacks: they were developed primarily for complex, overparameterized models 
where memorization is prevalent~\citep{carlini2022membership}, and their 
discriminative power diminishes in strongly convex settings where all methods 
converge toward the unique optimum. Nevertheless, considerable differences appear. 
The \textbf{Fine-Tune} baseline exhibits the highest privacy leakage, 
particularly at small $r_f$. Unlike other methods, \textbf{Fine-Tune} lacks 
of mechanisms to actively degrade performance on the forget set, leaving its 
loss on forgotten samples low and thus vulnerable to attack.

Regarding utility, \VRU achieves the lowest excess risk for small 
and moderate values of $r_f$. At $r_f = 0.1$, however, \VRU incurs 
higher excess risk than empirical methods as it is the only method to include a noise addition step, and the noise 
scales with $r_f$. This trade-off is 
expected: as $r_f$ increases, the computational advantage of unlearning over 
full retraining diminishes (cf.\ Figure~\ref{fig:cvx}), making $r_f \geq 0.1$ a 
less compelling regime for unlearning algorithms in general. In such cases, 
practitioners may prefer full retraining, which achieves both perfect privacy 
and comparable utility.
\section{Conclusion}

In this work, we introduced Variance-Reduced Unlearning (\VRU), the first first-order $(\varepsilon, \delta)$-unlearning algorithm to incorporate forget set gradients into its optimization process. By anchoring a variance-reduction mechanism at the pre-trained model, VRU bridges the gap between certified methods, which rely primarily on descent on the retain set, and empirical approaches, which also exploit the forget set but lack formal privacy guarantees.

Our convergence analysis for strongly convex, smooth, and Lipschitz losses establishes an $\mathcal{O}(1/e)$ dependence on the target excess risk, improving upon the $\mathcal{O}(1/e^2)$ scaling of prior certified methods and yielding a strictly larger efficiency regime compared to retraining from scratch. Beyond these improved rates, we proved a fundamental property: for a given range of error and forget ratio, \VRU asymptotically outperforms any first-order $(\varepsilon, \delta)$-unlearning algorithm that does not access the forget set. Experiments on strongly convex objectives corroborate our theoretical findings, demonstrating consistent gains over both certified and empirical baselines.

Our analysis relies on strong convexity and, in principle, on the knowledge of the exact pre-trained optimum $\vtheta^*$, though our experiments suggest robustness to inexact initialization. Additionally, while our bounds depend only on the local condition number, this dependence may still be restrictive in certain settings. A natural direction for future work is to extend variance-reduced unlearning to relaxed notions of convexity that better capture the behavior of neural networks near local optima, such as the Polyak-Łojasiewicz condition~\citep{karimi2016linear, liu2022loss} or the neural tangent kernel regime~\citep{jacot2018neural}.
\bibliography{main}
\newpage
\appendix
\onecolumn
\section{Proofs of main theoretical results.}\label{app:theo}

We start by recalling this lemma from the literature, allowing to bound the distance between the starting point of our unlearning procedure and the global optimum.

\begin{lemma}[Lemma C.2, \citet{van2025forget}]
\label{lemma:close_opt}
\begin{equation}
    \norm{\vtheta^*-\vtheta^*_r} \le \frac{r_f}{1-r_f}\cdot\frac{L}{\mu}\,.
\end{equation}
\end{lemma}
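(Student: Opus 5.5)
We prove the bound $\norm{\vtheta^*-\vtheta^*_r}\le \frac{r_f}{1-r_f}\frac{L}{\mu}$ by combining the first-order optimality of $\vtheta^*$ for the mixture objective with strong monotonicity of the retain gradient. Write $\Lcal_r(\cdot)=\Lcal(\cdot,\Dr)$ and $\Lcal_f(\cdot)=\Lcal(\cdot,\Df)$. By linearity of expectation, $\Lcal(\cdot,\D)=(1-r_f)\Lcal_r+r_f\Lcal_f$.

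\textbf{Step 1: regularity of the averaged losses.} Under Assumption~\ref{ass:loss_regularity}, the averaged losses inherit the pointwise properties. The function $\Lcal_r$ is $\mu$-strongly convex, since the strong convexity inequality is preserved under expectation. The function $\Lcal_f$ is $L$-Lipschitz, hence $\norm{\nabla \Lcal_f(\vtheta)}\le L$ wherever it is differentiable. Differentiation under the expectation is justified by $\beta$-smoothness together with the uniform gradient bound $L$, via dominated convergence.

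\textbf{Step 2: first-order conditions.} Since $\vtheta^*$ minimizes $\Lcal(\cdot,\D)$, we have $(1-r_f)\nabla\Lcal_r(\vtheta^*)+r_f\nabla\Lcal_f(\vtheta^*)=0$. This is exactly \Eqref{eq:unbiased}. It gives
\[
\norm{\nabla\Lcal_r(\vtheta^*)}=\tfrac{r_f}{1-r_f}\norm{\nabla\Lcal_f(\vtheta^*)}\le \tfrac{r_f}{1-r_f}L.
\]
Likewise, $\nabla\Lcal_r(\vtheta^*_r)=0$.

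\textbf{Step 3: strong monotonicity.} By $\mu$-strong convexity of $\Lcal_r$,
\[
\mu\norm{\vtheta^*-\vtheta^*_r}^2\le\langle\nabla\Lcal_r(\vtheta^*)-\nabla\Lcal_r(\vtheta^*_r),\vtheta^*-\vtheta^*_r\rangle .
\]
Using $\nabla\Lcal_r(\vtheta^*_r)=0$ and Cauchy--Schwarz, the right-hand side is at most $\norm{\nabla\Lcal_r(\vtheta^*)}\,\norm{\vtheta^*-\vtheta^*_r}$. Dividing by $\mu\norm{\vtheta^*-\vtheta^*_r}$ (the case $\vtheta^*=\vtheta^*_r$ is trivial) and inserting the bound from Step 2 yields the claim.

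\textbf{Main obstacle.} The only delicate point is the interplay between the loss being $L$-Lipschitz globally and the effective-domain remark, where $\ell$ may take the value $+\infty$. If $\ell$ is only Lipschitz on a bounded domain, the first-order condition may need to become a variational inequality. I would handle this by working with subgradients, or with the normal cone of the common domain, and noting that the same monotonicity argument still goes through. Otherwise, every step above is routine.
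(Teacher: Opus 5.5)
Your argument is correct and is essentially the paper's. The lemma itself is only cited, but the paper's proof of the refined Lemma~\ref{lemma:close_opt_implem} uses exactly your combination of the optimality condition $(1-r_f)\nabla\Lcal_r(\vtheta^*)=-r_f\nabla\Lcal(\vtheta^*,\Df)$ with $\mu\norm{\vtheta^*-\vtheta^*_r}\le\norm{\nabla\Lcal_r(\vtheta^*)}$, and the stated bound then follows from $\norm{\nabla\Lcal(\vtheta^*,\Df)}\le L$ (your remark about the bounded effective domain is also sound, since the variational-inequality optimality conditions at $\vtheta^*$ and $\vtheta^*_r$ give the same chain).
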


We then bound the Lipschitz constant of \VRU's stochastic gradient estimator around $\vtheta^*$, allowing for a better characterization of the local properties of the loss.
\begin{lemma}\label{lemma:small_grad}
For any $\vtheta \in B\left(\vtheta^*, \frac{r_f}{1-r_f}\frac{L}{\mu}\right)$ and $t\in\sN^*$, 
\begin{equation}
    \norm{\nablaT_t(\vtheta)} \le (1+\kl)\frac{r_f}{1-r_f}L,
\end{equation}
where $\kl$ is the condition number of the loss $\ell$.
\end{lemma}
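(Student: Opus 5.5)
The plan is to split the estimator $\nablaT_t(\vtheta)=\nabla\ell(\vtheta,\xi^r_t)-\nabla\ell(\vtheta^*,\xi^r_t)-\frac{r_f}{1-r_f}\nabla\ell(\vtheta^*,\xi^f_t)$ into its two natural pieces and bound each one with a single regularity property from Assumption~\ref{ass:loss_regularity}. The triangle inequality gives
\begin{equation*}
\norm{\nablaT_t(\vtheta)} \le \norm{\nabla\ell(\vtheta,\xi^r_t)-\nabla\ell(\vtheta^*,\xi^r_t)} + \frac{r_f}{1-r_f}\norm{\nabla\ell(\vtheta^*,\xi^f_t)}.
\end{equation*}
The bound is pointwise in the samples $(\xi^r_t,\xi^f_t)$, so it holds almost surely for every $t\in\sN^*$.

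For the first term, I use $\beta$-smoothness of $\ell(\cdot,\xi^r_t)$. It gives $\norm{\nabla\ell(\vtheta,\xi^r_t)-\nabla\ell(\vtheta^*,\xi^r_t)}\le\beta\norm{\vtheta-\vtheta^*}$. Since $\vtheta$ lies in the ball $B(\vtheta^*,\frac{r_f}{1-r_f}\frac{L}{\mu})$, this is at most $\beta\cdot\frac{r_f}{1-r_f}\frac{L}{\mu}=\kl\frac{r_f}{1-r_f}L$.

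For the second term, I use $L$-Lipschitz continuity of $\ell(\cdot,\xi^f_t)$. For a differentiable function this gives $\norm{\nabla\ell(\vtheta^*,\xi^f_t)}\le L$, so the term is at most $\frac{r_f}{1-r_f}L$. Adding the two pieces yields $(1+\kl)\frac{r_f}{1-r_f}L$, which is the claim.

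There is essentially no obstacle here. The only point needing care is that the Lipschitz bound on gradients must hold at $\vtheta^*$. Strong convexity and Lipschitz continuity together force a bounded effective domain, so the Lipschitz assumption should be read as holding on that domain, which contains the projection ball. I would remark that $\vtheta^*$ lies in this domain because it is a minimizer of finite risk. I would also note that the iterates of \VRU stay in the ball by the projection in Line~\ref{eq:proj}, so the lemma applies to every iterate, and that the ball contains $\vtheta_r^*$ by Lemma~\ref{lemma:close_opt}.
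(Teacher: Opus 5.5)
Your proof is correct and matches the paper's argument step for step: the triangle inequality, $\beta$-smoothness for the retain-sample gradient difference, and the $L$ bound on the forget-sample gradient. The paper attributes the final step to Lemma~\ref{lemma:close_opt}, but that step only needs the hypothesis $\norm{\vtheta-\vtheta^*}\le\frac{r_f}{1-r_f}\frac{L}{\mu}$, which is how you use it.
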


\begin{proof}
\begin{align}
    \norm{\nablaT_t(\vtheta)} &\le \norm{\nabla\ell(\vtheta, \xi^t_r) - \nabla\ell(\vtheta^*, \xi^t_r)} + \left(\frac{r_f}{1-r_f}\right) \norm{\nabla\ell(\vtheta^*, \xi^t_f)} \\
    &\le \beta\norm{\vtheta-\vtheta^*} + \left(\frac{r_f}{1-r_f}\right)L\\
    &\underset{(1)}{\le} \left(\frac{r_f}{1-r_f}\right) L (1+\kl)\:,
\end{align}
where (1) is obtained through Lemma \ref{lemma:close_opt}.
\end{proof}

We are now in a position to prove the theorem.

\mainthm*
\begin{proof}[Proof, \theoref{theo:VRU_speed} ]
Applying the \VRU algorithm for $T$ iterations is equivalent to applying $T$ iterations of the PSGD algorithm (see \eg 12.4 in \citet{handbook}) to the stochastic variance-reduced gradient $\nablaT_t$ with step size $1/\mu t$ then applying Gaussian noise with magnitude $\left(\frac{r_f}{1-r_f}\right)\nu_T\kdp$ (see Eq. \ref{eq:nu_T}). We remind that the stochastic gradient estimator $\nablaT$ is unbiased (\Eqref{eq:unbiased}). Using Lemma \ref{lemma:small_grad}, we bound the gradient of any sample with probability $1$ and can thus apply
Proposition 1 in \citet{rakhlin2011making}. Hence, with probability at least $1-\delta/2$,
\begin{align}
    \norm{\vtheta_T-\vtheta^*_r}^2 &\le (624\log(2\log(T)/\delta)+1)\left(\frac{r_f}{1-r_f}\right)^2L^2\frac{(1+\kl)^2}{\mu^2 T} \label{eq:almost_sure}\\
    &\le \frac{1}{2}\left(\frac{r_f}{1-r_f}\right)^2 \nu_T^2 
\end{align}
where $\nu_T\coloneqq \sqrt{2h(T,\delta)} L \frac{1+\kl}{\mu\sqrt{T}}$ and $h(T,\delta) \coloneqq 624 \log(2\log(T)/\delta)+1\:.$

This sensitivity bound is somewhat unusual: it only holds with high probability. We show in \lemref{lemma:high_proba_DP} that applying an $(\varepsilon,\delta_1)-$DP mechanism when the sensitivity is bounded with probability $1-\delta_2$ rather than deterministically achieves $(\varepsilon,\delta_1+\delta_2)-$DP. We apply this result with $\delta_1=\delta_2=\delta/2$.
We thus define the noised estimate $\thetilde_T \coloneqq \vtheta_T + \left(\frac{r_f}{1-r_f}\right) \kdp \nu_T Z$, where $Z\sim\Ncal(0,1)$, achieving $(\varepsilon,\delta)-$unlearning through the standard Gaussian mechanism \cite{DP_book}.

We can now evaluate the loss of the noised model:
\begin{align}
\E{\Lcal_r(\Tilde{\vtheta}_T)-\Lcal_r(\vtheta^*_r)} &\le \E{\Lcal_r(\vtheta_T) - \Lcal_r(\vtheta^*_r)} + \frac{\beta}{2} \sE \norm{\Tilde{\vtheta}_T-\vtheta_T}^2 \\
&\underset{(1)} {\le} \frac{2}{T} \kl e_0\left(\frac{r_f}{1-r_f}\right)^2(1+\kl)^2 + 
\kl(1+\kl)^2 d \kdp^2 h(T,\delta) \left(\frac{r_f}{1-r_f}\right)^2 \frac{e_0}{2T} \\
&\le (1+\kl)^2 \frac{e_0}{2T}\left(\frac{r_f}{1-r_f}\right)^2\left(4\kl+d\kl\kdp^2 h(T,\delta)\right)\:.
\end{align}
(1): The first term is obtained through \citet{rakhlin2011making}'s Theorem 1.

The inequality can be rewritten as 
$T \le a + b \log \log T$, with $a>0$ and $b>0$. For $T\ge \exp(1)$, it holds
\begin{align}
T & \le a + b \log \log T  \le a + b\log T\\
  & \le (a+b) \log T.
\end{align}
From Lemma A.1 in \citet{shalev_book}, it follows that
\begin{equation}
    T \le 2(a+b) \log(a+b).
\end{equation}
We can then conclude:
\begin{equation}
    T_e(\ell, \VRU) = \tilde{\Ocal}\left( \kl^3(1+d\kdp^2 \log \left(1/\delta\right)) \frac{e_0}{e}\left(\frac{r_f}{1-r_f}\right)^2 \right)
\end{equation}

\end{proof}

Corollary \ref{corr:vru_vs_nft} is obtained directly by dividing the result in \theoref{theo:VRU_speed} by the one obtained in Theorem 3 in \citet{van2025forget}.

Let $\Ucal\in\sU^r_{\varepsilon,\delta}$ be an unlearning algorithm that does not access the forget set gradient.
Let $\Acal\in\sA$ be retraining from scratch on the retain set $\Dr$ with the PSGD algorithm, as defined in Alg. 12.4 of \citet{handbook}.

We recall the next two results, describing the speed of $\Acal$ and comparing it to methods in $\sU^r_{\varepsilon,\delta}$ and in $\sA$.

\begin{theorem}[Theorem 2, \citet{van2025forget}]\label{theo:lbunlearn}

Let $\delta \in[10^{-8},\varepsilon]$. Under Assumption \ref{ass:flex}, for any $\delta_{min}>0$, there exists a universal constant $c > 0$ such that, if $e < \min\left\{1,\,c \left(\frac{r_f}{1-r_f}\right)^2 \left(1+\kdp^2\right)\right\} e_0$, then, for any $\Ucal\in\sU^r_{\varepsilon,\delta}$,
\begin{equation}
\frac{T_e(\Ucal)}{T_e(\Acal)} = \Omega(1)\,.
\end{equation}
\end{theorem}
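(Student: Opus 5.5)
The plan is to sandwich both quantities between the same order $e_0/e$. The upper bound on $T_e(\Acal)$ is immediate: $\Acal$ is PSGD with step size $1/(\mu t)$ on $\Dr$, and the standard rate (e.g.\ Theorem~1 of \citet{rakhlin2011making}) gives $T_e(\Acal)=\Ocal(\kl e_0/e)$ up to constants. So the real content is a matching lower bound $T_e(\Ucal)=\Omega(e_0/e)$ (with the same $\kl$-dependence) for every $\Ucal\in\sU^r_{\varepsilon,\delta}$, in the stated regime of $e$. The lower bound is a worst case over $\ell\in\Fcal$, so I am free to build adversarial instances.

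\textbf{Step 1 (privacy forces $\Ucal$ to forget its initialization).} Since $\Ucal$ never queries $\Df$, its only dependence on $\Df$ is through the starting point $\vtheta^*$. Using Assumption~\ref{ass:flex}, I would keep $\Dr$ fixed and compare two forget distributions, each a point mass at a point of $(S_r\cup S_f)^c$. I would tilt the loss at these points so that the two resulting minimizers $\vtheta^*$ and $\vtheta^{*\prime}$ lie at distance of order $\frac{r_f}{1-r_f}\frac{L}{\mu}$, which is the largest separation allowed by Lemma~\ref{lemma:close_opt}, while $\vtheta^*_r$ stays unchanged. Definition~\ref{def:unlearning} ties the law of $\Ucal$'s output to the same $\Acal(\Dr)$ in both cases. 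Hence, by the triangle inequality for $(\varepsilon,\delta)$-closeness, the outputs started from $\vtheta^*$ and from $\vtheta^{*\prime}$ are $(2\varepsilon,(1+e^\varepsilon)\delta)$-indistinguishable, while $\Ucal$ sees exactly the same retain samples in both runs. Requiring $\delta\ge\delta_{\min}$ and $\delta\le\varepsilon$ keeps these constants under control.

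\textbf{Step 2 (reduction to stochastic optimization from an uninformative start).} From Step 1 I would argue that the iterates cannot exploit the position of $\vtheta^*$ to more than a $\kdp$-scaled precision. Indistinguishability forces the output to carry randomness at scale roughly $\kdp\frac{r_f}{1-r_f}\frac{L}{\mu}$ in the direction separating the two starts, i.e.\ an excess risk of order $\kdp^2(r_f/(1-r_f))^2 e_0$. The hypothesis $e<c(\frac{r_f}{1-r_f})^2(1+\kdp^2)e_0$ is exactly what makes this residual exceed the target $e$. That residual can then only be removed by genuine stochastic optimization on $\Dr$. Equivalently, once started from a point whose distance to $\vtheta^*_r$ is uncertain at that scale, $\Ucal$ behaves like a retraining algorithm (up to constants). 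This is the step I expect to be the main obstacle. I would try to formalize it by randomizing the forget point over a small packing of directions and applying a Fano/Le Cam argument to the pair (initialization, retain oracle), instead of only two hypotheses.

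\textbf{Step 3 (oracle lower bound).} Finally, I would embed the classical lower bound for stochastic first-order strongly convex optimization into $\Dr$. Using sets $A$ of arbitrary mass $p$ (Assumption~\ref{ass:flex}), I would realize a Bernoulli-type family of losses in $\Fcal$ whose minimizers differ by an amount of order $\sqrt{e/\mu}$ and whose gradient samples have variance of order $L^2$. Le Cam's two-point method then shows that $\Omega(L^2/(\mu e))=\Omega(e_0/e)$ samples from $\Dr$ are needed to reach excess risk $e$, regardless of how the start was chosen. Combining this with Step 2 gives $T_e(\Ucal)=\Omega(e_0/e)$. Dividing by the upper bound on $T_e(\Acal)$ from the first paragraph yields $T_e(\Ucal)/T_e(\Acal)=\Omega(1)$.
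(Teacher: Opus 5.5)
Your Step~1 and Step~3 are reasonable ingredients, but the argument fails at Step~2, and Step~3 relies on an assumption that is false. (The paper imports this result from \citet{van2025forget} without proof, so your sketch has to stand on its own.)

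In Step~1 you compare two starts under the \emph{same} retain problem, so $\vtheta^*_r$ is identical in both runs. The resulting $(2\varepsilon,(1+e^{\varepsilon})\delta)$-indistinguishability is then satisfied for free by any algorithm whose output ignores the start. For example, PSGD on $\Dr$ from a fixed point is forget-set-free and $(0,0)$-unlearning, injects no noise, and has excess risk tending to $0$. Hence your Step~2 claim, that indistinguishability ``forces randomness at scale $\kdp\frac{r_f}{1-r_f}\frac{L}{\mu}$, i.e.\ an excess risk of order $\kdp^2(\frac{r_f}{1-r_f})^2e_0$'', is false as a statement about all of $\sU^r_{\varepsilon,\delta}$. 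It only constrains outputs that stay tied to the start. The sentence ``that residual can then only be removed by genuine stochastic optimization'' does not turn it into a bound on $T$. Strategies that exploit the start only partially (perturb the start and then fine-tune, or fine-tune and then add reduced noise) are exactly what the theorem must exclude, and your argument says nothing about them.

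Step~3 has the matching flaw: ``regardless of how the start was chosen'' is wrong. $\Ucal$ observes $\vtheta^*$, and Le Cam's two-point bound applies only if both hypotheses present the same start, or starts that are themselves linked.

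What is missing is a hybrid (group-privacy) chain in which the hypotheses differ in the \emph{retain} problem.
\begin{itemize}
\item Take retain problems $P_0,\dots,P_k$, realized through sets $A$ of varying mass. Their optima should move in steps of order $\frac{r_f}{1-r_f}\frac{L}{\mu}$, with total displacement of order $\sqrt{e/\mu}$.
\item When $k=1$, i.e.\ $e\lesssim c(\frac{r_f}{1-r_f})^2e_0$, tilt the loss on the forget support so that the two full-data optima coincide. Le Cam on the $T$ retain samples then gives $T\gtrsim L^2/(\mu e)$ with no use of privacy at all.
\item For the $\kdp^2$ part, alternate two kinds of links. A privacy link fixes $P_j$ and switches between two realizable starts; this is your Step~1, via the common $\Acal(P_j)$. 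A statistical link fixes a start common to $P_j$ and $P_{j+1}$ and switches the retain law, at total-variation cost $O(\sqrt{T}\,\frac{r_f}{1-r_f})$.
\end{itemize}
With $k\lesssim\sqrt{c}\,\kdp$ links this yields $\mathbb{P}_0[S]\le e^{2k\varepsilon}\bigl(\mathbb{P}_k[S]+O(k\delta+k\sqrt{T}\,\tfrac{r_f}{1-r_f})\bigr)$. Take $S$ to be the accuracy region of $P_0$, a ball of radius of order $\sqrt{e/\mu}$ disjoint from that of $P_k$; this forces $T\gtrsim L^2/(\mu e)$.

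This chain is where the conditions on $\delta$ are actually used. First, $\delta\ge\delta_{\min}$ gives $k\varepsilon\lesssim\sqrt{c}\,\varepsilon\kdp=\sqrt{2c\log(2.5/\delta)}$, a small constant once $c$ is chosen depending on $\delta_{\min}$. Second, $\delta\le\varepsilon$ makes $k\delta$ small. In your single-link Step~1 these conditions play no role, which is a symptom of the missing chain.

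You have both ingredients, the privacy triangle inequality and the two-point statistical bound. What is missing is the interleaving that combines them, and that interleaving is the actual content of the theorem.
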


\begin{lemma}[Lemma 4.2, \citet{van2025forget}]
\label{lemma:scratch_speed} 
Under Assumption \ref{ass:flex}, and if $e<e_0$, we have
\begin{equation}
T_e(\Acal) = \Theta\left( \frac{e_0}{e} \right)\,
\end{equation}
\end{lemma}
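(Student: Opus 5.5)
The statement immediately above is Lemma~\ref{lemma:scratch_speed}: for retraining from scratch $\Acal$ (PSGD on $\Dr$), $T_e(\Acal)=\Theta(e_0/e)$ whenever $e<e_0$. I would prove the two directions separately.

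\textbf{Upper bound.} I would run PSGD with step size $1/(\mu t)$ on $\Lcal_r$ over a ball of diameter $4L/\mu$ containing the effective domain. Stochastic gradients are bounded by $L$, so Theorem~1 of \citet{rakhlin2011making} (the same result used in the proof of Theorem~\ref{theo:VRU_speed}) gives the following. The suffix-averaged or final iterate satisfies
\[
\E{\Lcal_r(\vtheta_T)-\Lcal_r(\vtheta^*_r)}=\Ocal\left(\frac{\beta L^2}{\mu^2 T}\right).
\]
I would identify $e_0$ with this scale. In \citet{van2025forget}, $e_0$ is the initial-error scale $\kl L^2/\mu$, up to constants. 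Setting the bound $\le e$ then yields $T=\Ocal(e_0/e)$.

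\textbf{Lower bound.} This is the main obstacle, and it is where Assumption~\ref{ass:flex} is needed. I would use a two-point, Le Cam-style reduction over a family of losses in $\Fcal$. Since $(S_r\cup S_f)^c\neq\emptyset$ and $\Dr$ can be split into sets of any mass $p$, I can build, for a sign $\sigma\in\{\pm1\}$, a quadratic-plus-linear loss $\ell_\sigma(\vtheta,\xi)=\frac{\mu}{2}\norm{\vtheta}^2-\sigma g(\xi)\langle v,\vtheta\rangle$. The function $g$ would be chosen so that each sample reveals $\sigma$ only weakly, namely through an event of probability $p$ under $\Dr$. The two retain optima are then separated by roughly $pL/\mu$.

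Any retraining algorithm sees $T$ samples. For it to distinguish the two hypotheses with constant probability, it needs $T\gtrsim 1/p$. If it fails to distinguish them, its excess risk is at least of order $\mu(pL/\mu)^2\approx e_0 p^2$, up to the $\kl$ factors absorbed in $e_0$. Choosing $p\approx\sqrt{e/e_0}$ would only give $T\gtrsim \sqrt{e_0/e}$, so this naive choice is too weak.

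To get the full $e_0/e$ rate, I would instead use a variance-type construction. Each sample carries a $\pm L$ gradient noise, giving a Bernoulli-mean estimation problem with separation $\Delta$. This requires $T\gtrsim L^2/(\mu^2\Delta^2)$ samples, which translates to $T\gtrsim e_0/e$. The standard information-theoretic argument (KL between product Bernoullis) handles this. The condition $e<e_0$ ensures that the required $\Delta$ fits inside the domain while keeping Lipschitzness and smoothness.

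Matching the two bounds gives $\Theta(e_0/e)$.
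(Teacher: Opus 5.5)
Your lower-bound idea is the standard one, but the concluding step ``matching the two bounds gives $\Theta(e_0/e)$'' does not go through. Your two bounds differ by a factor $\kl$, and the normalization $e_0\asymp\kl L^2/\mu$ you use to hide it is not the paper's.

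\emph{Where $e_0$ comes from.} In the proof of Theorem~\ref{theo:VRU_speed}, the final-iterate bound $2\beta G^2/(\mu^2T)$ of \citet{rakhlin2011making}, with $G=(1+\kl)\frac{r_f}{1-r_f}L$, is written as $\frac{2}{T}\kl e_0(1+\kl)^2\bigl(\frac{r_f}{1-r_f}\bigr)^2$. This forces $e_0\asymp L^2/\mu$, and it is where the $\kl^3$ of that theorem comes from. It is also the only normalization under which the threshold $e<e_0$ makes sense with $\kl$ tracked. Every feasible point already has excess risk at most $L\cdot 4L/\mu$, so $T_e(\Acal)=O(1)$ once $e\ge 4L^2/\mu$. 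By contrast, $\kl L^2/(\mu e)$ is unbounded in $\kl$ on $[4L^2/\mu,\kl L^2/\mu)$.

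\emph{Why the bounds do not match.} With $e_0\asymp L^2/\mu$, your upper bound (Theorem~1 of \citet{rakhlin2011making}, which uses smoothness) only gives $T_e(\Acal)=O(\kl e_0/e)$. Your Bernoulli construction gives $\Omega(L^2/(\mu e))=\Omega(e_0/e)$. With your normalization instead, the upper bound matches, but your own lower bound becomes $\Omega(e_0/(\kl e))$. Moreover, $e<e_0$ then no longer guarantees that the separation $\Delta\approx\sqrt{e/\mu}$ fits in a domain of radius $O(L/\mu)$. Absorbing $\kl$ into $e_0$ therefore works on one side only. The paper keeps $\kl$ explicit: the $\kl^3$ in Corollary~\ref{cor:beat_retrain} and Theorem~\ref{theo:VRU-beats} is obtained by dividing by this lemma. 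So the discrepancy is not cosmetic.

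\emph{How to fix it.} Use a smoothness-free guarantee. The suffix-averaged bound you mention carries no $\beta$: PSGD with step $1/(\mu t)$ and $\alpha$-suffix averaging has excess risk $O(L^2/(\mu T))$ for $\mu$-strongly convex losses with stochastic gradients bounded by $L$ \citep{rakhlin2011making}. That yields $T_e=O(e_0/e)$ with no $\kl$. If $\Acal$ must return its last iterate, either state explicitly that $\Theta$ hides $\kl$ in the upper direction, or prove a $\kl$-free last-iterate bound. Note that only the lower bound is used downstream, and that half is $\kl$-free.

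The lower bound needs care on two points.
\begin{enumerate}
\item A two-point argument over losses only makes sense in a stochastic first-order oracle model. There the algorithm sees $\ell$ only through $\nabla\ell(\cdot,\xi_t)$ with $\xi_t\sim\Dr$, and does not know $\sigma$. Since $\Dr$ is fixed and the supremum is over $\ell\in\Fcal$, the hypotheses must be encoded in the loss. The first part of Assumption~\ref{ass:flex} is what supplies sets $A_\pm$ with $\mathbb{P}_{\xi\sim\Dr}[\xi\in A_\pm]=(1\pm\Delta)/2$; the second part is not needed.
\item $T_e$ is a first-hitting time, so a per-$T$ statement of the form ``one of $\ell_\pm$ has risk above $e$'' is not enough. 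You need a single loss whose risk exceeds $e$ for every $T<T_0$. An algorithm that alternates between the two candidate optima defeats any fixed pair.
\end{enumerate}

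For the PSGD algorithm $\Acal$ of the lemma, both issues disappear with a direct computation. Take $A$ with $\mathbb{P}_{\xi\sim\Dr}[\xi\in A]=1/2$, let $s(\xi)=1$ on $A$ and $-1$ otherwise, and fix a unit vector $v$. Define $\ell(\vtheta,\xi)=\frac{\mu}{2}\norm{\vtheta}^2-\frac{L}{2}s(\xi)\langle v,\vtheta\rangle$ on $B(0,\frac{L}{2\mu})$, and $+\infty$ outside; this loss lies in $\Fcal$. The first step, with step size $1/\mu$, erases the initialization. The projection is never active, and $\vtheta_{T+1}=\frac{L}{2\mu T}\sum_{t=1}^{T}s(\xi_t)v$. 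The excess risk is therefore exactly $\frac{L^2}{8\mu T}$ for every $T$. This gives $T_e(\Acal)\ge\frac{L^2}{8\mu e}=\Omega(e_0/e)$ without any information-theoretic argument.
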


\begin{proof}[Proof of Cor. \ref{cor:beat_retrain}:]
    We divide the upper bound in 
    \theoref{theo:VRU_speed} by the lower bound in

    \lemref{lemma:scratch_speed} and 
    the result follows.
\end{proof}

We now have all the building blocks necessary to prove \theoref{theo:VRU-beats}.

\begin{proof}[Proof of \theoref{theo:VRU-beats}]
    Let $c$ be the constant in \theoref{theo:lbunlearn}. 

    Let
    $e< c \kdp^2 \left(r_f/(1-r_f)\right)^2 e_0$ and $\delta\in[0,\eps]$. Let $\Ucal\in\sU^r_{\varepsilon,\delta}$. We control the speed of \VRU compared to $\Acal$ through Cor.~\ref{cor:beat_retrain}. We control the speed of $\Ucal$ compared to $\Acal$ through \theoref{theo:lbunlearn}.

    Then, 
    \begin{equation}
        \frac{T_e(\VRU)}{T_e(\Ucal)} = \frac{T_e(\VRU)}{T_e(\Acal)} \frac{T_e(\Acal)}{T_e(\Ucal)} = \tilde{\Ocal}\left(\left(1+d\kdp^2 \log\left(\frac{1}{\delta}\right)\right)\kl^3\left(\frac{r_f}{1-r_f}\right)^2\right)\:.
    \end{equation}
\end{proof}

\paragraph{Technical lemma.}
We show how a high-probability bound on the sensitivity can still translate to $(\varepsilon,\delta)-$differential privacy by adding the failure probabilities of the bound and the DP. While we are probably not the first to prove this result, we were unable to find a direct formulation elsewhere.

\begin{lemma}[Differential Privacy under High-Probability Sensitivity]
\label{lemma:high_proba_DP}
Suppose that a sensitivity bound $\Delta$ holds with probability at least $1 - \delta_1$. If a mechanism $\mathcal{M}$ satisfies $(\varepsilon, \delta_2)$-DP when the sensitivity is at most $\Delta$, then $\mathcal{M}$ satisfies $(\varepsilon, \delta_1 + \delta_2)$-DP.
\end{lemma}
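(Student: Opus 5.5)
The approach is a standard conditioning argument. Let $G$ be the \emph{good event} on which the sensitivity bound holds, so that $\mathbb{P}[G^c]\le\delta_1$. We split every output probability according to whether $G$ occurs, and use the $(\varepsilon,\delta_2)$ guarantee only on $G$. Concretely, let the two outputs to compare be $X=\mathcal{M}(\text{input}_1)$ and $Y$, where $Y$ is either the output on the neighbouring input or the retraining output $\Acal(\Dr)$ in the unlearning formulation. Here the relevant randomness is that of the PSGD trajectory, and $G$ is the event that $\norm{\vtheta_T-\vtheta^*_r}$ is within the bound of \eqref{eq:almost_sure}.

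\textbf{Step 1: decompose on $G$.} For any measurable $S$,
\[
\mathbb{P}[X\in S]\le \mathbb{P}[X\in S,\,G]+\mathbb{P}[G^c]\le \mathbb{P}[X\in S,\,G]+\delta_1 .
\]

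\textbf{Step 2: bound the term on $G$.} I intend to write $\mathbb{P}[X\in S,\,G]=\E{\1_G\,\mathbb{P}[X\in S\mid \text{trajectory}]}$. Conditionally on a trajectory in $G$, the pre-noise point lies within distance $\Delta$ of the reference point, namely $\vtheta^*_r$ or the retrained centre. Since the Gaussian mechanism is $(\varepsilon,\delta_2)$ on every pair at distance at most $\Delta$, the conditional probability is at most $e^{\varepsilon}\,\mathbb{P}[Y\in S]+\delta_2$. Integrating over $G$ and using $\mathbb{P}[G]\le1$ then gives
\[
\mathbb{P}[X\in S]\le e^{\varepsilon}\,\mathbb{P}[Y\in S]+\delta_1+\delta_2 .
\]
To make this clean, I would choose the retraining algorithm $\Acal$ to output $\vtheta^*_r$ plus the same Gaussian noise, which is legitimate because it depends only on $\Dr$. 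The comparison on $G$ is then a pointwise Gaussian-versus-Gaussian comparison with shift at most $\Delta$.

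\textbf{Step 3: the reverse inequality.} Condition on the trajectory again. On $G$, the Gaussian mechanism gives $\mathbb{P}[Y\in S]\le e^{\varepsilon}\,\mathbb{P}[X\in S\mid\text{traj}]+\delta_2$. On $G^c$, I would use the trivial bound $\mathbb{P}[Y\in S]\le 1$. Averaging over trajectories yields
\[
\mathbb{P}[Y\in S]\le e^{\varepsilon}\,\mathbb{P}[X\in S,\,G]+\delta_2\,\mathbb{P}[G]+\mathbb{P}[G^c]\le e^{\varepsilon}\,\mathbb{P}[X\in S]+\delta_1+\delta_2 .
\]

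\textbf{Main obstacle.} The delicate point is that the "mechanism" here has a data-dependent, random centre. The DP guarantee must therefore be applied conditionally on that randomness, and the reference distribution $Y$ must not itself depend on $G$. I expect the hardest part to be stating this conditioning precisely: the Gaussian noise must be independent of the trajectory, so that conditioning is valid, and the two-sided inequality in Definition~\ref{def:unlearning} must hold with the same $\delta_1+\delta_2$ in both directions. Steps 1–3 handle both directions.
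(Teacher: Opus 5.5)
Your proposal is correct and follows the same argument as the paper: split on the good event, apply the $(\varepsilon,\delta_2)$ guarantee on it, and pay at most $\Pr[G^c]\le\delta_1$ on the complement. Your version is more careful than the paper's in two respects: conditioning on the trajectory (with Gaussian noise independent of it, and a reference output independent of $G$) makes the conditional DP step rigorous, and you also prove the reverse inequality demanded by the two-sided unlearning definition, which the paper's one-directional proof leaves implicit.
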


\begin{proof}
Let $E$ be the event that the sensitivity bound holds, so $\Pr[E] \ge 1 - \delta_1$. For any measurable $S$:
\begin{align*}
\Pr[\mathcal{M}(D) \in S] 
&= \Pr[\mathcal{M}(D) \in S \mid E]\Pr[E] + \Pr[\mathcal{M}(D) \in S \mid E^c]\Pr[E^c] \\
&\le \bigl(e^\varepsilon \Pr[\mathcal{M}(D') \in S \mid E] + \delta_2\bigr)\Pr[E] + \Pr[E^c] \\
&\le e^\varepsilon \Pr[\mathcal{M}(D') \in S] + \delta_1 + \delta_2. \qedhere
\end{align*}
\end{proof}

\section{Run-time improvements of \VRU}\label{app:VRU-exp}

\begin{algorithm}[t]
\caption[\textbf{VRU-exp}]{\textbf{VRU-exp} (Variance Reduced Unlearning, experiments version)}
\label{alg:VRU-exp}
\begin{algorithmic}[1]
\REQUIRE Number of iterations $T$, trained model $\vtheta^*$, loss function $\ell$, forget ratio $r_f \in (0,1)$, retain set $\Dr$, forget set $\Df$, privacy budget $\kdp$

\STATE Set $\theta_0 = \theta^*$, 
\STATE Compute a full-batch gradient on the forget set: $\nabla^*_f \coloneqq \Lcal(\vtheta^*,\Df)$
\STATE Compute the improved error value $\nu_T^{\text{exp}}$ (\Eqref{eq:nu_T_exp}).
\STATE Set $R^{\text{exp}} \coloneqq\frac{r_f}{1-r_f}\frac{\norm{\nabla^*_f}}{\mu}$
\FOR{$t = 0$ to $T-1$}
    \STATE Sample data point: $\xi_t^{r}\sim \Dr$,
    \STATE Compute variance-reduced gradient:
    \[
    \tilde{\nabla}_t \gets \nabla \ell(\vtheta_t, \xi_t^{r})
       - \nabla \ell(\vtheta^*, \xi_t^{r})
       - \frac{r_f}{1-r_f} \, \nabla^*_f  
    \]
    \STATE Update parameters and perform projection on the ball: $\vtheta_{t+1} \gets \operatorname{proj}_{B(\vtheta^*,R^{\text{exp}})}\left(\vtheta_t - 1/\mu t \, \tilde{\nabla}_t\right)$
\ENDFOR
\STATE Sample $Z\sim\Ncal(0,1)$ 

\STATE Noise the model to ensure unlearning: \[\tilde{\vtheta}_T = \vtheta_T + \left(\frac{r_f}{1-r_f}\right) \nu^{\text{exp}}_T \kdp Z\]

\STATE \textbf{return} Final model $\tilde{\vtheta}_T$
\end{algorithmic}
\end{algorithm}

We introduce the following results with practical implementation of  in mind. They providing guidance for implementing \VRU more efficiently while preserving its theoretical guarantees. In particular, Alg. \ref{alg:VRU-exp} describes a practical implementation of \VRU that avoids requiring the Lipschitz constant $L$, which is often intractable to compute in practical settings. 
The convergence speed of  Alg. \ref{alg:VRU-exp} to $\vtheta^*_r$ can be proven by replacing \lemref{lemma:close_opt}

by \lemref{lemma:close_opt_implem}, and \lemref{lemma:small_grad} by \lemref{lemma:small_grad_exp} in the proof of \theoref{theo:VRU_speed}.
The update rule for Alg. \ref{alg:VRU-exp} thus becomes 
\begin{equation}\label{eq:VRU_loss}
    \nablaT^{\text{exp}}(\vtheta, \xi^r) = \nabla \ell (\vtheta,\xi^r) - \nabla \ell (\vtheta^*,\xi^r) - \frac{r_f}{1-r_f} \nabla \Lcal (\vtheta^*,\Df)\:, 
\end{equation}
where the full-batch gradient on $\Df$ is only computed once, before optimization begins.

The following result is similar to \lemref{lemma:close_opt},

but leverages specific forget set gradient, not the worst-case Lipschitz bound,
\begin{lemma}[Bounded optima distance]
\label{lemma:close_opt_implem}
\begin{equation}
    \norm{\vtheta^*-\vtheta^*_r} \le \frac{r_f}{1-r_f}\cdot\frac{\norm{\nabla\Lcal(\vtheta^*, \Df)}}{\mu}\,.
\end{equation}
\end{lemma}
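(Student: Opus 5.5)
The bound follows from the first-order optimality conditions at the two minimizers, combined with strong convexity of the retain risk. Write $\Lcal_r(\cdot)=\Lcal(\cdot,\Dr)$ and $\Lcal_f(\cdot)=\Lcal(\cdot,\Df)$, so that $\Lcal(\cdot,\D)=(1-r_f)\Lcal_r+r_f\Lcal_f$.

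\emph{Step 1: express the retain gradient at $\vtheta^*$.} Since $\vtheta^*$ minimizes $\Lcal(\cdot,\D)$, which is differentiable under Assumption~\ref{ass:loss_regularity}, we have $\nabla\Lcal(\vtheta^*,\D)=0$. This gives
\[
\nabla\Lcal_r(\vtheta^*)=-\frac{r_f}{1-r_f}\nabla\Lcal_f(\vtheta^*),
\]
which is exactly \Eqref{eq:unbiased}.

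\emph{Step 2: use strong monotonicity of the retain gradient.} The retain risk $\Lcal_r$ is an expectation of $\mu$-strongly convex functions, so it is itself $\mu$-strongly convex. Its gradient is therefore strongly monotone:
\[
\langle\nabla\Lcal_r(\vtheta^*)-\nabla\Lcal_r(\vtheta^*_r),\,\vtheta^*-\vtheta^*_r\rangle\ge\mu\norm{\vtheta^*-\vtheta^*_r}^2 .
\]
Because $\vtheta^*_r$ minimizes $\Lcal_r$, we have $\nabla\Lcal_r(\vtheta^*_r)=0$.

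\emph{Step 3: apply Cauchy--Schwarz and substitute.} Cauchy--Schwarz on the left-hand side gives $\mu\norm{\vtheta^*-\vtheta^*_r}^2\le\norm{\nabla\Lcal_r(\vtheta^*)}\,\norm{\vtheta^*-\vtheta^*_r}$. If $\vtheta^*=\vtheta^*_r$ there is nothing to prove; otherwise divide by $\norm{\vtheta^*-\vtheta^*_r}$ to get $\norm{\vtheta^*-\vtheta^*_r}\le\norm{\nabla\Lcal_r(\vtheta^*)}/\mu$. Substituting the identity from Step~1 yields
\[
\norm{\vtheta^*-\vtheta^*_r}\le\frac{r_f}{1-r_f}\cdot\frac{\norm{\nabla\Lcal(\vtheta^*,\Df)}}{\mu},
\]
which is the claim.

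\emph{Main technical point.} The only step needing care is the interchange of expectation and gradient, i.e.\ that $\nabla\Lcal(\cdot,\Dr)=\E{\nabla\ell(\cdot,\xi)}$ with $\xi\sim\Dr$, and likewise for $\Df$ and $\D$. This holds by dominated convergence, since the per-sample gradients are bounded by $L$ under Assumption~\ref{ass:loss_regularity}. Once that is justified, the same argument with the bound $\norm{\nabla\Lcal(\vtheta^*,\Df)}\le L$ recovers Lemma~\ref{lemma:close_opt} as a special case.
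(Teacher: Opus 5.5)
Your proof is correct and follows the same route as the paper: strong convexity of $\Lcal(\cdot,\Dr)$ gives $\mu\norm{\vtheta^*-\vtheta^*_r}\le\norm{\nabla\Lcal(\vtheta^*,\Dr)}$, and stationarity of $\vtheta^*$ for $\Lcal(\cdot,\D)$ turns the retain gradient into $-\frac{r_f}{1-r_f}$ times the forget gradient. You spell out the strong-monotonicity and Cauchy--Schwarz steps that the paper leaves implicit, and by taking norms of the vector identity you avoid the stray minus sign in the paper's norm identity.
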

\begin{proof}
    By strong convexity of $\Lcal(\cdot, \Dr)$, we have $\mu\norm{\vtheta^*-\vtheta^*_r} \le \norm{\nabla\Lcal_r(\vtheta^*)}$. 
    Additionally, $\norm{\nabla\Lcal(\vtheta^*, \Dr)} =  -\frac{r_f}{1-r_f}\norm{\nabla\Lcal(\vtheta^*, \Df)}$. This concludes the proof.
\end{proof}

 Instead of bounding $\norm{\nablaT_t(\vtheta)}$ on $B\left(\vtheta^*, \frac{r_f}{1-r_f}\frac{L}{\mu}\right)$, we can reduce the radius of the ball to $B\left(\vtheta^*, \frac{r_f}{1-r_f}\frac{\norm{\Lcal(\vtheta^*,\Df)}}{\mu}\right)$ (see \lemref{lemma:close_opt_implem}). This also allows for a reduction in the gradient's bound, as described in the following result.

\begin{lemma}\label{lemma:small_grad_exp}
For any $\vtheta \in B\left(\vtheta^*, \frac{r_f}{1-r_f}\frac{\norm{\Lcal(\vtheta^*,\Df)}}{\mu}\right)$ and $t\in\sN^*$, 
\begin{equation}
    \norm{\nablaT_t(\vtheta)} \le \frac{r_f}{1-r_f}(1+\kl)\norm{\nabla\Lcal(\vtheta^*,\Df)}\:,
\end{equation}
where $\kl$ is the condition number of the loss $l$, and the supremum is defined as the set is non-empty and upper-bounded by $L$.
\end{lemma}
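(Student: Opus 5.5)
The plan mirrors the proof of Lemma~\ref{lemma:small_grad}: bound the two pieces of $\nablaT^{\text{exp}}$ separately on the smaller ball. Write $g^*_f \coloneqq \nabla\Lcal(\vtheta^*,\Df)$ and $R^{\text{exp}} \coloneqq \frac{r_f}{1-r_f}\frac{\norm{g^*_f}}{\mu}$. In the statement, $\norm{\Lcal(\vtheta^*,\Df)}$ should be read as this gradient norm $\norm{g^*_f}$, consistent with Lemma~\ref{lemma:close_opt_implem}. Here $\nablaT_t$ means the estimator of Alg.~\ref{alg:VRU-exp},
\[
\nablaT_t(\vtheta) = \nabla\ell(\vtheta,\xi^r_t) - \nabla\ell(\vtheta^*,\xi^r_t) - \tfrac{r_f}{1-r_f}\, g^*_f .
\]
The forget term is now a deterministic full-batch gradient rather than a stochastic sample. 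This is exactly what lets the bound use $\norm{g^*_f}$ in place of $L$.

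First, apply the triangle inequality to get
\[
\norm{\nablaT_t(\vtheta)} \le \norm{\nabla\ell(\vtheta,\xi^r_t)-\nabla\ell(\vtheta^*,\xi^r_t)} + \tfrac{r_f}{1-r_f}\norm{g^*_f}.
\]
Second, $\beta$-smoothness of $\ell(\cdot,\xi^r_t)$ from Assumption~\ref{ass:loss_regularity} bounds the first term by $\beta\norm{\vtheta-\vtheta^*}$. Third, since $\vtheta$ lies in the ball $B(\vtheta^*,R^{\text{exp}})$, we have $\beta\norm{\vtheta-\vtheta^*} \le \beta R^{\text{exp}} = \kl\frac{r_f}{1-r_f}\norm{g^*_f}$. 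Adding the two terms gives $\frac{r_f}{1-r_f}(1+\kl)\norm{g^*_f}$, as claimed, and the bound holds for every sample $\xi^r_t$, i.e.\ with probability one.

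The remaining remark in the statement is that this quantity is well defined and at most the worst-case value. Lipschitzness of each $\ell(\cdot,\xi)$ gives $\norm{\nabla\ell(\vtheta^*,\xi)}\le L$. Jensen's inequality then gives $\norm{g^*_f}\le \E{\norm{\nabla\ell(\vtheta^*,\xi^f)}}\le L$. Hence the new bound never exceeds the one in Lemma~\ref{lemma:small_grad}, and the new ball is contained in the old one.

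The only step that needs care is the notational one: confirming that the ball radius and the bound involve the gradient norm, so that smoothness and the radius combine correctly. Beyond that, the argument is a two-line computation. It also produces the almost-sure gradient bound needed to rerun the \citet{rakhlin2011making} argument for Alg.~\ref{alg:VRU-exp}. Lemma~\ref{lemma:close_opt_implem} guarantees that the projection ball contains $\vtheta^*_r$.
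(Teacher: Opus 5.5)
Your proof is correct and follows essentially the same argument as the paper: triangle inequality, then $\beta$-smoothness of $\ell(\cdot,\xi^r_t)$, then the ball radius. As in the paper, it relies on the forget term being the deterministic full-batch gradient and on reading the statement's $\norm{\Lcal(\vtheta^*,\Df)}$ as the gradient norm. Justifying the last step by ball membership is more precise than the paper's citation of Lemma~\ref{lemma:close_opt_implem} there, and your Jensen remark that $\norm{\nabla\Lcal(\vtheta^*,\Df)}\le L$ gives the comparison with the worst-case bound that the paper leaves unargued.
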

\begin{proof}
\begin{align}
    \norm{\nablaT_t(\vtheta)} &\le \norm{\nabla\ell(\vtheta, \xi^t_r) - \nabla\ell(\vtheta^*, \xi^t_r)} + \left(\frac{r_f}{1-r_f}\right) \norm{\nabla\Lcal(\vtheta^*, \Df)} \\
    &\le \beta\norm{\vtheta-\vtheta^*} + \left(\frac{r_f}{1-r_f}\right)\norm{\nabla\Lcal(\vtheta^*, \Df)}\\
    &\underset{(2)}{\le} \frac{r_f}{1-r_f}(1+\kl)\norm{\nabla\Lcal(\vtheta^*,\Df)}  \:,
\end{align}
where (2) is obtained through Lemma \ref{lemma:close_opt_implem}.
\end{proof}

To simplify the expression of the VRU empirical algorithm, we define
\begin{equation}\label{eq:nu_T_exp}
    \nu^{\text{exp}}_T\coloneqq \frac{\sqrt{2h(T,\delta)}}{\mu\sqrt{T}}\norm{\nabla\Lcal(\vtheta^*,\Df)}
    (1+\kl).
\end{equation}

\begin{figure}
    \centering
    \includegraphics[width=0.5\textwidth]{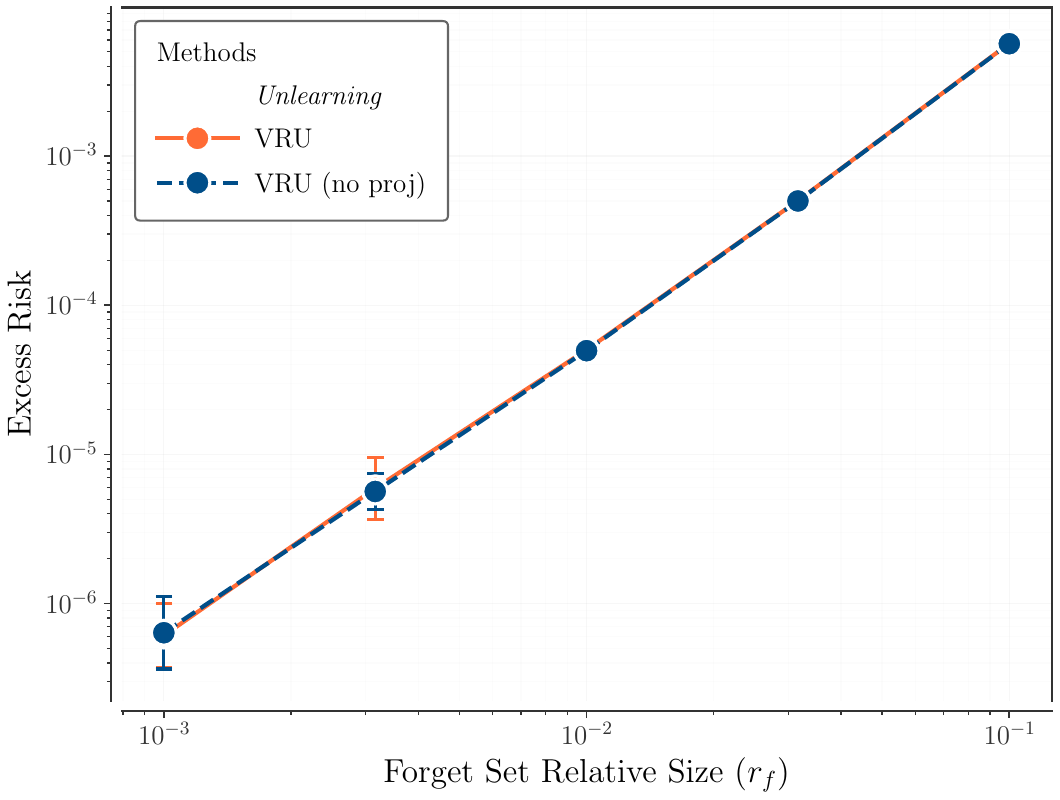}
    \caption{Ablation study on the projection step. Excess risk versus forget fraction $r_f$ for \VRU with and without projection, using $\kappa_{\epsilon,\delta} = 0.1$. The projection step has minimal impact on convergence, indicating \VRU's robustness to this algorithmic choice. Error bars: $\pm 1$ std.\ over 30 runs.}
    \label{fig:app}
\end{figure}

\section{Implementation Details}\label{app:exp_details}

\subsection{Common Experimental Setup}

\paragraph{Dataset and Model.}
All experiments use the Digits dataset~\cite{Digit} with a logistic regression model trained using cross-entropy loss, coupled with a $L2$-weight regularization of weight $0.1$. Training uses a batch size of $8$.

\paragraph{Evaluation Protocol.}
Excess risk measures the gap between the unlearned model's test loss and the retrained-from-scratch baseline. The results are aggregated across seeds, reporting means with standard errors. Geometric means and error bars are used as the measured variables tend to span across several orders of magnitude.

\subsection{Experimental Setup for Figure~\ref{fig:cvx}}

\paragraph{Unlearning Configuration.}
We set the unlearning epoch budget to $T = 10$ and evaluate $n_{r_f} = 5$ uniformly spaced forget ratios in the range $r_f \in [10^{-3}, 0.1]$. We run each experiment with 30 independent random seeds: $0$ through $29$.

\paragraph{Method-Specific Hyperparameters.}
Table~\ref{tab:unlearning_hyperparams_rf} summarizes the hyperparameters for each unlearning method.

\begin{table}[t]
\caption{Hyperparameters for each unlearning and retraining method in \secref{subsec:certified}.}
\label{tab:unlearning_hyperparams_rf}
\centering
\begin{tabular}{lccc}
\toprule
\textbf{Method} & \textbf{Learning Rate} & \textbf{LR Decay} & \textbf{Method Type} \\
\midrule
VRU & 1.1 & 0.55 & Unlearning \\
NFT & $3 \times 10^{-1}$ & 0.8 & Unlearning \\
GD & 2.0 & 0.8 & Retraining \\
SVRG & 1.0 & 0.4 & Retraining \\
SGD & 0.5 & 0.9 & Retraining \\
\bottomrule
\end{tabular}
\end{table}

\subsection{Experimental Setup for Figure~\ref{fig:empirical}}

\paragraph{Unlearning Configuration.}
We set the unlearning epoch budget to $T = 5$, as empirical methods target steeper computational gains,
and evaluate three forget ratios: $r_f \in \{3 \times 10^{-3}, 2 \times 10^{-2}, 10^{-1}\}$. We do not take $r_f$ smaller than $3 \times 10^{-3}$ as results become too unstable when attacking only a few samples, \ie only 0\%, 50\%, or 100\% accuracy for any attack on $\abs{\Df}=1$. We run each experiment with $3$ random seeds $(0, 1, 2)$.

\paragraph{Method-Specific Hyperparameters.}
Table~\ref{tab:unlearning_hyperparams} summarizes the hyperparameters for each unlearning method. The learning rate decays by the specified factor after each epoch. The parameter $\alpha$ denotes the weight of the ascent step in SCRUB and NegGrad+.

\begin{table}[t]
\caption{Hyperparameters for each unlearning method in \secref{subsec:empirical}.}
\label{tab:unlearning_hyperparams}
\centering
\begin{tabular}{lccc}
\toprule
\textbf{Method} & \textbf{Learning Rate} & \textbf{LR Decay} & \textbf{$\alpha$} \\
\midrule
VRU & 1 & 0.6 & -- \\
Fine-Tune & $5 \times 10^{-3}$ & 0.8 & -- \\
NegGrad+ & $3 \times 10^{-3}$ & 0.7 & $5 \times 10^{-3}$ \\
SCRUB & $5 \times 10^{-3}$ & 0.8 & $5 \times 10^{-3}$ \\
\bottomrule
\end{tabular}
\end{table}

As this is an empirical evaluation, unlike for the previous subsection, we do not provide the bounded sensitivity to methods requiring it. 
For \VRU, the noise is applied empirically with $\kdp = 0.1$, which offers a good trade-off, taking $\nu_T=1$, as the smoothness parameter is not known. However, knowing the value of $\nu_T$ would not change our results, as it would simply re-scale the noise, which is equivalent to re-scaling $\kdp$, whose value is not relevant, nor reported in the main text, in this comparison to empirical methods.

\paragraph{Privacy Evaluation via Membership Inference.}
We assess privacy risk using the U-LiRA membership inference attack with 5 shadow models. Target shadow models are trained on $\mathcal{D} = \mathcal{D}_f \cup \mathcal{D}_r$ and then unlearn $\mathcal{D}_f$, while reference shadow models are trained from scratch on $\mathcal{D}_r$ only. To construct the attack set, we sample $n_f = |\mathcal{D}_f|$ elements uniformly at random from the test set to form $\mathcal{D}_{\mathrm{test}}$. The attack set is $\mathcal{D}_{\mathrm{attack}} = \mathcal{D}_f \cup \mathcal{D}_{\mathrm{test}}$, and we report the U-LiRA re-identification attack accuracy as the empirical privacy risk.

\subsection{Impact of averaging and projection step}
The \VRU algorithm, even in its experiment-adjusted form (\Algref{alg:VRU-exp}), requires a projection step on a ball of radius $R^{\text{exp}} \coloneqq\frac{r_f}{1-r_f}\frac{\norm{\nabla^*_f}}{\mu}$ after each iteration. If this ball's radius was to be incomputable, one might wonder if the algorithm would still function properly. We answer this question by analyzing the performance of $VRU$ without the projection step. We place ourself in the experimental setting of \Figref{fig:empirical}, and report the result in \Figref{fig:app}, showing the excess risk as a function of $r_f$, for $5$ logarithmically-spread values of in $[10^{-3}, 0.1]$. We find that the projection step has little to no impact on the loss evolution and is not necessary for the algorithm to effectively reach the optimum in our experimental setting. The reported error bars represent $\pm 1$ standard deviations over $30$ independent runs.

\end{document}